\newcommand\defeq{\coloneqq}
\newcommand\minrank[1]{{\operatorname{min~rank}_\Omega(#1)}}
\newcommand\minepsrank[1]{{\operatorname{min~rank}_{\Omega,\varepsilon}(#1)}}
\title{
On the Low-Rank Parametrization of Reward Models \\ for Controlled Language Generation
}
\author{%
\name Sergey Troshin
\email \eml{s.troshin@uva.nl} \\
\addr Language Technology Lab,
Informatics Institute,
University of Amsterdam
\AND
\name Vlad Niculae
\email \eml{v.niculae@uva.nl}\\
\addr Language Technology Lab,
Informatics Institute,
University of Amsterdam
\AND
\name Antske Fokkens
\email \eml{antske.fokkens@vu.nl}\\
\addr Computational Linguistics and Text Mining Lab, \\ Faculty of Social Sciences and Humanities, Vrije Universiteit Amsterdam
}
\DeclareMathOperator{\rank}{rank}
\newcommand{\myhline}{\noalign{\vskip 0.4ex}\hline\noalign{\vskip 0.8ex}}
\newcommand{\mydashedcline}[1]{\noalign{\vskip 0.4ex}\cdashline{#1}\noalign{\vskip 0.8ex}}
\DeclareMathOperator{\softmax}{Softmax}
\newcommand{\alignedintertext}[1]{%
  \noalign{%
    \vskip\belowdisplayshortskip
    \vtop{\hsize=\linewidth#1\par
    \expandafter}%
    \expandafter\prevdepth\the\prevdepth
  }%
}
\begin{document}
\maketitle
\begin{abstract}

Language models trained on large amounts of data are known to produce inappropriate content in some cases and require careful tuning to be used in the real world. We revisit an effective and modular approach for controllability of the language models, when an external expert model guides the decoding. Particularly, we zoom in into the \textit{parametrization choice} of an external expert, highlighting the difference between low-rank and higher-rank parametrizations. Higher-rank experts are designed to support high flexibility when representing the rewards, leading to higher computational costs during decoding. However, we demonstrate that they might not use their full flexibility. By analyzing the recently proposed reward-augmented decoding approach (RAD), which uses a higher-rank expert model, we introduce a simpler but more efficient low-rank parametrization of the expert model enabling fast and effective guided decoding. We empirically show that the low-rank RAD performs on par with the more flexible RAD on a detoxification and a sentiment control task, while requiring only a single reward model call per generated token. 

\end{abstract}

\section{Introduction}

Generative large language models (LLMs) have gained a lot of popularity in recent years and shown impressive results in zero-shot and few-shot scenarios on numerous downstream tasks \citep{touvron2023llama2openfoundation, openai2024gpt4, jiang2023mistral}. These large-scale models are pretrained on large amounts of data, and are known to inherit and memorize underlying biases \citep{sheng-etal-2019-woman} as well as to provide unsafe responses \citep{wallace-etal-2019-universal, ganguliRedTeamingLanguage2022}, necessitating further tuning for safer deployment and control \citep{ouyangTrainingLanguageModels2022}.

Control over LLMs can be roughly divided into methods which modify the original model via finetuning \citep{ouyangTrainingLanguageModels2022, rafailovDirectPreferenceOptimization2023}, and decoding-time solutions, which do not modify the parameters of the original model, including best-of-n sampling \citep{wang2023selfconsistency, sun2024fast}. As models increase in size, finetuning becomes prohibitive with limited computational resources. In this work, we focus on a more modular approach of decoding-time guidance, and assume we have access to top-$k$ logits of a black-box base language model (see \Cref{sec:preliminaries} for details). In this line of work, a discriminator model is trained to modify or re-rank the logits of the base model during decoding in order to satisfy the desired constraint \citep{yang-klein-2021-fudge}, while preserving the distribution of the language model as much as possible.

In our work, we zoom in on the parametrization choice of the reward model. \citet{yang-klein-2021-fudge},  \citet{deng_2023_rad}, \citet{mudgal2024controlled}, \citet{chakraborty2024transfer} parameterize reward models as discriminators, which requires a separate forward pass through the backbone of the reward model (layers before the output head) for each token candidate. 
\citet{liu-etal-2021-dexperts}, \citet{krause-etal-2021-gedi-generative} and  \citet{caoSystematicRectificationLanguage2022} use more efficient approaches and perform a single forward pass to predict the scores for all next token candidates. A similar parametrization choice is encountered in the reinforcement learning literature (RL), where one chooses between state value function ($V$ function) or state-action value function ($Q$ function) parametrization \citep{sutton2018reinforcement, rl_highdimention}. In particular, 
closest to our work are
the decoding-time RL approaches, where V/Q functions are trained to guide the decoding from a language model \citep{mudgal2024controlled, chakraborty2024transfer}. In our work, we highlight the rank bottleneck  \citep{yang2018breaking} of the $Q$-style parametrized models in application to text reward modeling, where vocabulary size is larger than model dimension. We theoretically analyze the difference between these two parametrizations in terms of rank expressivity.
We provide a simple but realistic example, where $Q$-style parametrization is less expressive compared to $V$-style parametrization, suggesting that the $Q$-style choice of parametrization should not be treated as the default option indiscriminately.

Despite the rank bottleneck limitation, $Q$-style parametrization is more computationally attractive. We therefore ask whether there are cases where it is justifiable to use $Q$-style parametrization, which we answer affirmatively. Particularly, we look at the two popular controlled generation tasks and a recently proposed reward augmented decoding (RAD) approach \citep{deng_2023_rad}, where they train a $V$-style parametrized model (we will refer to it as RAD-V). While RAD-V demonstrates high effectiveness for controlled generation, it scales poorly when the number of next token candidates grows, requiring a separate forward pass through the backbone of the reward model for each token candidate. We demonstrate that for this scenario, high rank is not required to approximate the training data well, and hence RAD-V might not use its full flexibility. To empirically verify that a RAD approach using $Q$-style parametrization, RAD-Q, is then expressive enough for this scenario, we distill RAD-V into an efficient RAD-Q, and demonstrate that guided decoding with the RAD-Q model results in a comparable attribute control/fluency to the more flexible but more computationally intensive RAD-V approach.

\section{Preliminaries}

\begin{figure}[t]
  \centering
  \includegraphics[width=0.7\textwidth]{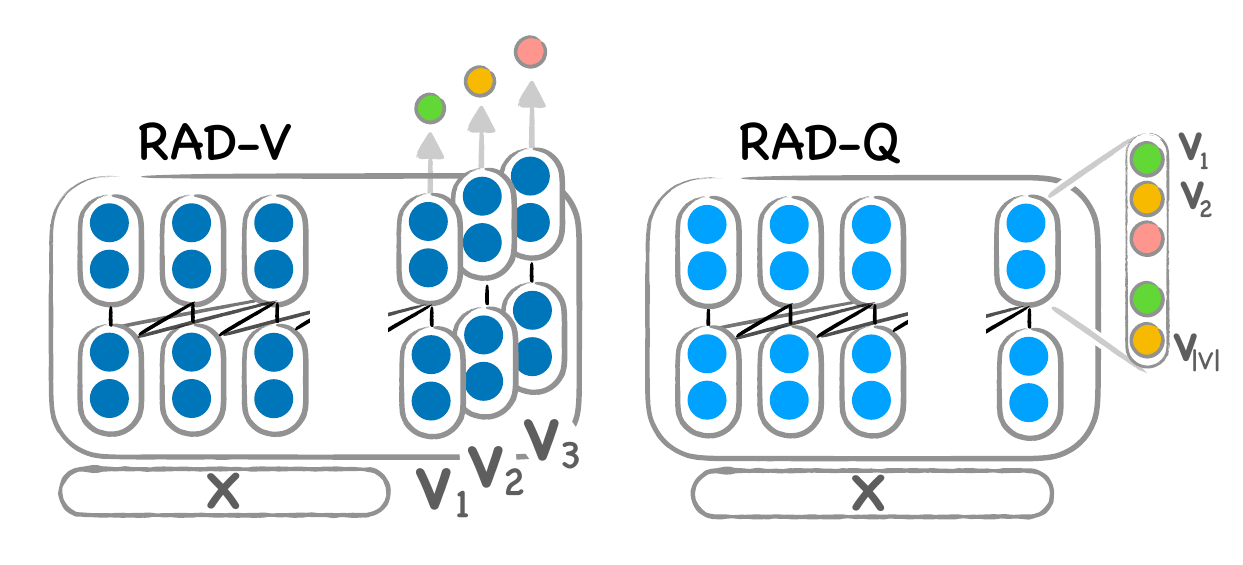}
  \caption{The RAD-V parametrized model (left) predicts a reward for each next token candidate $v$ independently concatenating it to the current prefix $x$. RAD-Q (right) parametrization uses the language model output embeddings to efficiently predict the rewards for next token candidates over the vocabulary $V$.}
  \label{fig:demo}
\end{figure}

\subsection{Guided decoding with external experts}
\label{sec:preliminaries}
In this section, we outline the approach of guiding a base language model with external token-level discriminators. At each step of decoding, both the base model and the discriminator observe an already generated prefix $x$, and cooperate to score the next token candidates $v \in V$. A language model predicts the logits 
 $z_{\text{LM}}(\cdot | x) \in \mathbb{R}^{|V|}$  and the goal of discriminator is to augment these logits with reward scores $\hat{r}(\cdot | x) \in \mathbb{R}^{|V|}$. A standard practice is to consider only likely tokens $V' \subseteq V$ at each decoding step \eg via top-$k$ \citep{fan-etal-2018-hierarchical, deng_2023_rad} or nucleus sampling \citep{Holtzman2020The}:
 \begin{equation}
    z(v|x) = \begin{cases}
      z_{\text{LM}}(v | x) + \beta \hat{r}(v | x), & \text{if} \, v \in V', \\
      -\infty, & \text{otherwise},
    \end{cases}
 \end{equation}
 and the next token is sampled from the categorical distribution:
\begin{equation}
\label{eq:Softmax}
    \tilde{p}(x) = \softmax(z(v|x)).
\end{equation}

While some language models might have a restrictive application programming interface (API) for safety reasons, this line of work makes a reasonable assumption that we have access to the top-$k$ logits of a language model either directly or via API for a relatively small $k \ll |V|$.

To define reward scores, $Q$-style models given a prefix $x$ only pass it once through the external language model backbone, and use the linear output layer to obtain the scores for each of the next token candidates. GeDi \citep{krause-etal-2021-gedi-generative} and DExperts \citep{liu-etal-2021-dexperts} use attribute-conditioned 
 unidirectional language models (undesired attribute in GeDi or two LM experts for desired and undesired attribute in DExperts), trained via the standard language modeling objective on class-conditioned data: $\hat{r}_{y}(v|x) = z_t(v | x, y)$, where $y \in \{0;1\}$ is the attribute (\eg positive/negative sentiment). 

Alternatively, a $V$-style parametrized model such as RAD-V \citep{deng_2023_rad} predicts the attribute of interest for a prefix \emph{concatenated} with a next token candidate $\hat{r}_{\text{RAD-V} }([x, v])$, where $[\cdot,\cdot]$ denotes the concatenation of a prefix and a next token candidate. 
This approach requires passing each next token candidate as \textit{input} to the model, thus, to obtain the scores for $k$ next token candidates $v$ for top-$k$ decoding RAD-V would need $k$ 
  forward calls of the reward model, which can slow down inference significantly and constrains them to limit the number of next token candidates. Therefore, we ask whether $Q$-style parametrized models can possibly match the performance of $V$-style models while enjoying higher efficiency.

\subsection{RAD training}
\label{sec:rad_training}
In this section, we outline how the RAD approach \citep{deng_2023_rad} uses labeled data to train a reward model. 

At the training stage, we assume that we have a dataset $\mathcal{D} = \{(u^{(i)}, y^{(i)})\}_{i=1}^n$ of $n$ text utterances $u$ of length $l(u)$ and responses $y \in [0;1]$.
The RAD approach is to use the data distribution to estimate the expected future response. For each utterance $u$ from the dataset, we can split it prefix $x'$, next token $v$, and a continuation $\mathbf{v}$, in all possible ways. This way, they create an extended dataset to train on partial prefixes:
\begin{equation}
\mathcal{D}_{f} = \{(x, u, y \, | \,x\!=\!u_{1:t}, t\in(1,\ldots,l(u)) ,(u,y) \in \mathcal{D}\}.
\end{equation}
RAD trains a reward model to predict $y$ given a text input. 
Then, during training, RAD takes the input prefix $x=[x', v]$ and incurs a weighted squared loss for approximating the future reward: 
\begin{equation}
\label{eq:cumulative_loss_rad}
    \cL(\hat{r}(v|x'), y, \lambda) = \lambda \cdot (\hat{r}(v|x') - y)^2,
\end{equation} where $\lambda$ are the discounting weights $\lambda(x, u)=l(x)/Z_u$, for each prefix used to up-weight prefixes closer to the full sentence, and $Z_u=\sum_{t=1}^{l(u)} l(u_{:t})$ is a normalizing constant such that $\sum_{t=1}^{l(u)} \lambda(u_{:t}, u) = 1$.
During training, we 
can use teacher forcing to process all prefixes of an utterance in a single pass.

\section{Reward modeling as low-rank matrix factorization}

\subsection{Analysis of RAD} 
\label{sec:matrix_factorization}

\subsubsection{Reward modeling as matrix completion}

To better understand the training objective of RAD, we start by looking at the optimization problems defined in \Cref{eq:cumulative_loss_rad}, where we optimize a reward model to approximate future responses. A unidirectional reward model can predict a reward value for each next next token candidate. If we enumerate all the contexts $x'$ in the training data and all possible next tokens $v$, we task a reward model to predict the values of $R \in \mathbb{R}^{N \times |V|}$, which we dub the \emph{reward matrix}. 

If each context would be observed only once, $R$ would have a single observed reward in each row. For short and common contexts we can observe more continuations per row, and also for some contexts there can be ambiguities: $\{(x, u_1, y_1), ... (x, u_m, y_m)\}$. 
From a mean squared error point of view, it is equivalent to compress these ambiguities by taking the weighted average of their $y$ (\Cref{app:loss_minimizer}):
\begin{equation}
    R[x',v] = \frac{\sum_{u, y \sim \mathcal{D}_f[x]} \lambda(x, u)\, y}{\sum_{u,y \sim \mathcal{D}_f[x]} \lambda(x, u)}.
\end{equation}

From this perspective, reward modeling 
can be interpreted as a matrix completion problem. The training dataset \(\mathcal{D}_f\)
gives us only an incomplete view of a true reward matrix \(R\). 
Following the notation in the matrix completion literature \citep{JMLR:v11:mazumder10a},
$\Omega$ denotes the set of indices of the observed entry indices
$\{(x',v) \,|\, x=[x',v], \,x \in D_f\}$,
and \(P_\Omega(R)\) denotes the projection of \(R\) that sets all indices outside \(\Omega\) to zero. 
We denote the complement of \(\Omega\) \wrt the complete set of indices as \(\neg\Omega\).
The full RAD objective for a $V$-style parametrized model is equivalent to minimizing $\|P_{\Omega}(R)~-~P_{\Omega}(\hat{R}_{\text{RAD-V} })\|_F^2$,
where each entry \(\hat{R}_{\text{RAD-V} }[x',v]=\hat{r}(v|x')\) can be computed with a forward pass. %

\begin{figure}[ht!]
  \centering
  \includegraphics[width=0.90\textwidth]{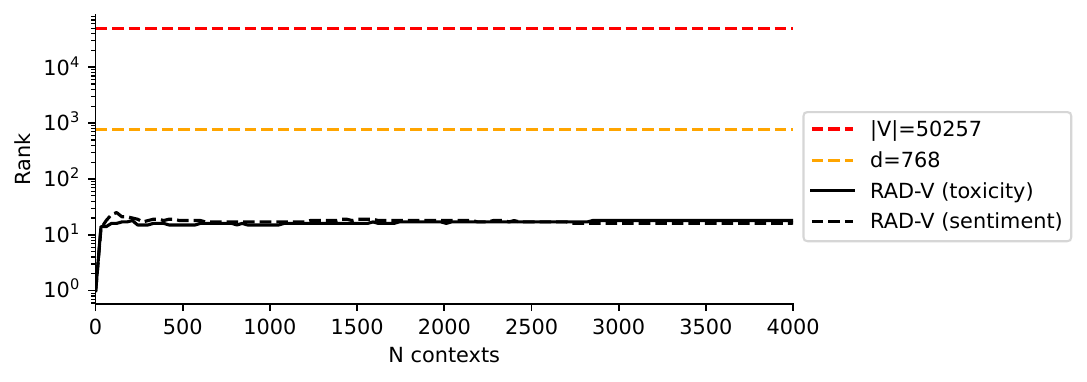}
  \caption{When considering a RAD-Q approach \eg distilling RAD-V into RAD-Q, it is important to ask whether the rank capacity of the RAD-Q model is enough to model RAD-V outputs. We numerically estimate the rank of $\hat{R}_{\text{RAD-V} }$ by incrementally considering more rows of the $\hat{R}_{\text{RAD-V}} $ matrix, and observe that the rank tends to be less than the model dimension $d=764$ and much less than $|V|$, the maximal possible rank of $P_{\Omega}(R)$.}
  \label{fig:rank_of_rad}
\end{figure}

 To analyze the reward matrices from the perspective of rank expressivity, we define the \textit{minimal rank} 
of a partially observed matrix
\begin{definition}
    Let $R$ be a $n \times m$ matrix, with $\Omega$ denoting a set of observed indices. 
    We define the minimal rank of $R$ \wrt $\Omega$ as
    the rank of the lowest-rank completion consistent with \(P_\Omega(R)\):
    \begin{equation}
        \minrank{R} \defeq \min \left\{\rank(\hat{R}) :
    \hat{R} \in \bbR^{n \times m}, P_\Omega(R) = P_\Omega(\hat{R}) \right\}
    \end{equation}
\end{definition}
Additionally, since numerical solutions to low-rank problems must incur some error, we also define a \emph{minimal numerical rank}.
\newpage
\begin{definition}
\label{definition:minimal_numerical_rank}
    Let $R$ be a $n \times m$ matrix, with $\Omega$ denoting a set of observed indices, and \(\varepsilon \in \bbR_+\).
    We define the minimal $\varepsilon$-rank of $R$ \wrt $\Omega$ as:
    \begin{equation}
        \minepsrank{R} \defeq 
        \min \left\{\rank(\hat{R}) :
    \hat{R} \in \bbR^{n \times m},
    \frac{1}{nm}{\| P_\Omega(R) - P_\Omega(\hat{R})\|}^2_F \leq \varepsilon \right\}.
    \end{equation}
\end{definition}

\subsubsection{RAD-V can be high-rank, but is not in practice}

Given a prefix $x$, RAD-V accepts a token candidate $v$ as an additional \textbf{input} to the model  $\hat{R}_{\text{RAD-V} }[x',v]=\hat{r}_{\text{RAD-V} }([x',v])$, passing $v$ through the layers of the reward model. For this reason, we expect RAD-V to have the capacity to represent a large space of reward matrices including matrices with higher rank. In \Cref{app:rad_rank}, we empirically verify that \textbf{RAD-V is capable to 
model data that has high minimal rank},
\ie, it can fit (with training loss numerically close to zero) a dataset $R$ that has $\minrank{R}>d$, where $d$ is the dimensionality of the Transformer model.
This flexibility does come at a cost: to score many next token candidates during top-$k$ decoding, RAD-V requires a forward pass through all layers of the model for each of the $k$ next token candidates. Following this observation, an important question is \emph{do we need this flexibility at the cost of slower decoding?}

In \Cref{fig:rank_of_rad}, we aim to measure the rank of $\hat{R}_{\text{RAD-V} }$ for RAD-V trained on two datasets: for detoxification and sentiment control tasks (discussed in detail in \Cref{sec:experiments}). To numerically estimate the rank, we follow \citet{finlayson2024closing} and first sample $N$ random prefixes $x$ from the dataset $\mathcal{D}_f$ to calculate $N$ full rows of $\hat{R}_{\text{RAD-V} }$ (requiring $N \cdot |V|$ calls to the RAD-V reward model). Then we use singular value decomposition with the standard singular value cutoff to compute the \textit{numerical rank} (\Cref{app:matrix_rank}). \textbf{We observe that the reward matrix learned by RAD-V tends to have \emph{low numerical rank}}, suggesting that it is possible to use less flexible but faster reward models to improve the efficiency of reward models.

\subsubsection{RAD training data is low-rank.}

One possible explanation why RAD-V does not use its full flexibility is that said flexibility is not required to fit the training data, \ie, that $P_{\Omega}(R)$ can be fit with the low-rank model. To analyze the rank needed to fit the data, we use the definition of the minimal numerical rank (\Cref{definition:minimal_numerical_rank}). Empirically calculating the minimal numerical rank of the data is challenging due to the very large number of prefixes. We use a combination of theoretical and empirical approaches listed in \Cref{sec:is_data_low_rank}, and we claim that the RAD training data (Jigsaw \citep{jigsaw-unintended-bias-in-toxicity-classification} and Amazon Polarity \citep{zhang2015character})  \textbf{has low minimal numerical rank}, that is less than the model dimension for $\varepsilon=10^{-6}$, and even less than $256$ if we allow $\varepsilon=10^{-3}$. We further confirm this claim for the commonly used  HelpSteer (helpfulness) dataset \citep{wang-etal-2024-helpsteer} and BeaverTails (safety) dataset \citep{NEURIPS2023_beavertails}, used for reward model training \citep{wang2024interpretablepreferencesmultiobjectivereward}. We thus draw the conclusion that the incomplete $P_{\Omega}(R)$ matrix can be fit with the low-rank matrix factorization with a small error. 

\subsubsection{Is low-rank enough to model reward matrices?}
When viewed from a matrix completion angle, it may seem that the reward modeling problem is well suited for low-rank modeling. However, \textbf{we will argue this should not be taken for granted}.

First, it is indeed true that data matrices generated by a certain kind of random process \citep{udell_random_rank} are low-rank with high probability. Moreover, in the case of missing values, intuition suggests that they should tend to low-rank, and indeed the following result demonstrates that we can almost always decrease the rank by $1$ when filling in one missing value (\Cref{app:lemma_1}):
\begin{lemma}
For $k>0$, let $R$ be a
random matrix with distribution supported on $\bbR^{k \times k}$, 
and a single missing value, \ie, $\neg\Omega = \{(i,j)\}$. 
Then, 
$P(\minrank{R} < k)=1$.
\end{lemma}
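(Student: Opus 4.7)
The plan is to reduce the problem to a linear equation in the single missing entry via a Laplace/cofactor expansion. Let $(i,j)$ be the missing position and write $\hat{R}(x)$ for the completion that inserts the value $x$ at $(i,j)$; all other entries are fixed (observed). Expanding $\det \hat{R}(x)$ along row $i$ gives
\[
\det \hat{R}(x) \;=\; (-1)^{i+j}\, x\, M_{ij} \;+\; c,
\]
where $M_{ij}$ is the determinant of the $(k-1)\times(k-1)$ submatrix obtained by deleting row $i$ and column $j$, and $c$ collects the remaining cofactor terms. Both $M_{ij}$ and $c$ are polynomials in the observed entries and do not depend on $x$, so $\det \hat{R}(x)$ is affine in $x$.

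The next step is to use this to produce a rank-deficient completion whenever $M_{ij}\neq 0$. Indeed, the equation $\det\hat{R}(x)=0$ then has the unique solution $x^{\star}=-c/\bigl((-1)^{i+j}M_{ij}\bigr)\in\bbR$, and $\hat{R}(x^{\star})$ is a valid completion of rank strictly less than $k$, witnessing $\minrank{R}<k$. Consequently,
\[
P\bigl(\minrank{R}<k\bigr)\;\geq\;P\bigl(M_{ij}\neq 0\bigr),
\]
and it suffices to verify that $P(M_{ij}=0)=0$. Since $M_{ij}$ is a nonzero polynomial in the $(k-1)^{2}$ observed entries lying outside row $i$ and column $j$, its vanishing locus is a proper algebraic subvariety of $\bbR^{(k-1)^{2}}$ and therefore has Lebesgue measure zero. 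Interpreting ``distribution supported on $\bbR^{k\times k}$'' as the natural continuity assumption (the joint law of the observed entries is absolutely continuous with respect to Lebesgue measure), this measure-zero set carries no mass and we conclude $P(\minrank{R}<k)=1$.

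The main obstacle is not algebraic but probabilistic: the cofactor-expansion argument is elementary, and the only subtle step is justifying that $\{M_{ij}=0\}$ is negligible. A pathological distribution putting all its mass on the variety $\{M_{ij}=0\}$ would violate the conclusion, so the proof truly hinges on reading the hypothesis as requiring the observed-entry distribution to assign zero probability to every proper algebraic subset of $\bbR^{(k-1)^{2}}$. Once that is in hand, the polynomial identity, the linear solve, and the final probability bound fit together immediately.
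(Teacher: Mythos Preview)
Your proposal is correct and follows essentially the same route as the paper: a cofactor expansion along the row (or column) containing the missing entry makes $\det\hat{R}(x)$ affine in $x$, so whenever the complementary minor $M_{ij}$ is nonzero one solves $\det\hat{R}(x)=0$ to obtain a rank-deficient completion, and the event $\{M_{ij}=0\}$ is a proper algebraic hypersurface of Lebesgue measure zero. You are, if anything, slightly more careful than the paper in flagging that ``distribution supported on $\bbR^{k\times k}$'' must be read as an absolute-continuity assumption (full topological support alone would not suffice), which is exactly the right caveat.
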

Moreover, when most values are missing, rank again seems likely to be low.  In particular, when only one value is observed per row (as demonstrated in \Cref{example:1}), a rank-1 completion is possible (\Cref{example:2}, \Cref{app:rank_1}). In a dataset, there will likely be many unique prefixes where this scenario is applicable, particularly for longer prefixes.

\begin{example}
\label{example:1}
As illustrated in \Cref{fig:sparse}, assume that all prefixes $x$ appear only once in the dataset, so that $\left|\{(i,j) \in \Omega : i=x\}\right|=1$ for all $x$. 
Then, \(\minrank{R}=1\).
\end{example}

\begin{example}
\label{example:2}
As illustrated in \Cref{fig:context_independent}, let each token in the vocabulary either appear in high-reward ($y$=1) or low-reward ($y$=0) utterances (independent of context). 
This gives a rank-1 completion consistent with \Cref{example:1}.
\end{example}

Given this observation, one might assume that having more missing values in a reward matrix always implies that the reward matrix can be fit with lower rank. However, this is not true, and the following result suggests that a reward matrix with many missing values and can still be forced to have a high rank:
\begin{lemma}
\label{lemma:high_rank}
For $k>0$, there exist $R \in \bbR^{k \times k}$
and $\Omega$ with $|\Omega| \in O(k^2)$
such that $\minrank{R}=k$.
\end{lemma}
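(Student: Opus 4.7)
The plan is to exhibit an explicit construction with $\Theta(k^2)$ observed entries (and also $\Theta(k^2)$ missing entries) for which every consistent completion has full rank. Concretely, I would take $\Omega = \{(i,j) : 1 \leq i \leq j \leq k\}$, the upper-triangular index set, giving $|\Omega| = k(k+1)/2 \in O(k^2)$, and fix the observations to agree with the identity matrix: $R_{ii}=1$ for all $i$ and $R_{ij}=0$ whenever $i<j$. All strictly lower-triangular entries are left unobserved and may be chosen freely by the completion.

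Next, I would argue that any completion $\hat{R} \in \bbR^{k \times k}$ agreeing with $R$ on $\Omega$ has linearly independent rows via a peeling argument that sweeps columns from right to left. Starting from an arbitrary relation $\sum_{i=1}^{k} c_i \hat{R}_{i,\cdot} = 0$, evaluation at the $k$-th coordinate uses the observations $\hat{R}_{i,k}=0$ for $i<k$ and $\hat{R}_{k,k}=1$, which immediately forces $c_k=0$. Moving to column $k-1$, the only entry that is not pinned by $\Omega$ is $\hat{R}_{k,k-1}$, but it is multiplied by the coefficient $c_k$ that has already been shown to vanish; the remaining observed entries then yield $c_{k-1}=0$. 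Iterating downward through the columns gives $c_i=0$ for every $i$, and hence $\rank(\hat{R})=k$, matching the trivial upper bound and establishing $\minrank{R}=k$.

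I do not anticipate a substantive obstacle; the main design choice is picking an observation pattern whose shape matches the triangular structure of the dependence equations. The only bookkeeping to verify is that at step $j$ of the induction, the unobserved positions in column $j$ lie exactly in rows $i>j$, whose coefficients have already been forced to zero — which is precisely why an upper-triangular $\Omega$ paired with identity-like observed values is tailored to make the decoupling work out.
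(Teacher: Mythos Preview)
Your proposal is correct and uses essentially the same construction as the paper (the identity matrix with one triangular half observed and the other half missing), differing only by a transpose. The paper's argument is slightly more compact: rather than peeling columns to establish linear independence of the rows, it simply observes that any completion is triangular with unit diagonal and therefore has determinant $1$, hence is full rank.
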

\begin{proof}
    Consider a $k$-by-$k$ identity matrix with $k(k-1)/2$ missing values above the diagonal (see \Cref{fig:full_rank} from \Cref{example:3}).
    Any possible completion results in an upper-triangular matrix.
    The determinant of such a matrix is the product of its diagonal elements, \ie, 1, so any completion must be full rank.
\end{proof}
The construction in \Cref{lemma:high_rank} is not artificial and could indeed correspond to a reward modeling task for the context-dependent constraints, as shown in the next examples. 
\begin{example}
\label{example:3} Consider a list of (first name, last name) pairs: [($k_i, v_j$)], $1\le i,j \le n$, where some pairs are marked as allowed or blocked. Assume that the rule is as follows: for any utterance containing $(k_i, v_j)$ with $j \le i$, we assign the reward $0$; if $j = i$, we assign the reward $1$, and unknown otherwise (\Cref{fig:full_rank}). We assume that each utterance contains a single mention of a person, then the expected reward matrix with $[\ldots, k_i]$ indexing rows and $v_j$ indexing columns will contain the upper-triangular submatrix, which is of rank $n$ .
\end{example}

\begin{example}\label{example:3b}
Consider an application of a LM to an arithmetical reasoning task, where you have examples in form of $x + y = z$, where $x$, $y$, $z$ are integer number tokens forming a vocabulary $V=[0..999]$. Let us define a reward function as $R(x + y = z) = 1$ if the expression is true, $0$ if false. Then by considering all possible contexts $x + y =$, it is easy to show that the reward matrix is full-rank since the rows of the reward matrix include all one-hot vectors representing the correct answer.
\end{example}
Empirically, our results find that Jigsaw \citep{jigsaw-unintended-bias-in-toxicity-classification} and Amazon Polarity \citep{zhang2015character} indeed support low-rank completions (\Cref{sec:experiments}). 
Nevertheless, from all results in this section, we conclude
that care must be taken before making low-rank assumptions about
reward datasets.

\begin{figure}[t]
    \centering
    \hspace{6mm}
    \begin{subfigure}{0.26\textwidth}
        \centering
        \includegraphics[width=\textwidth]{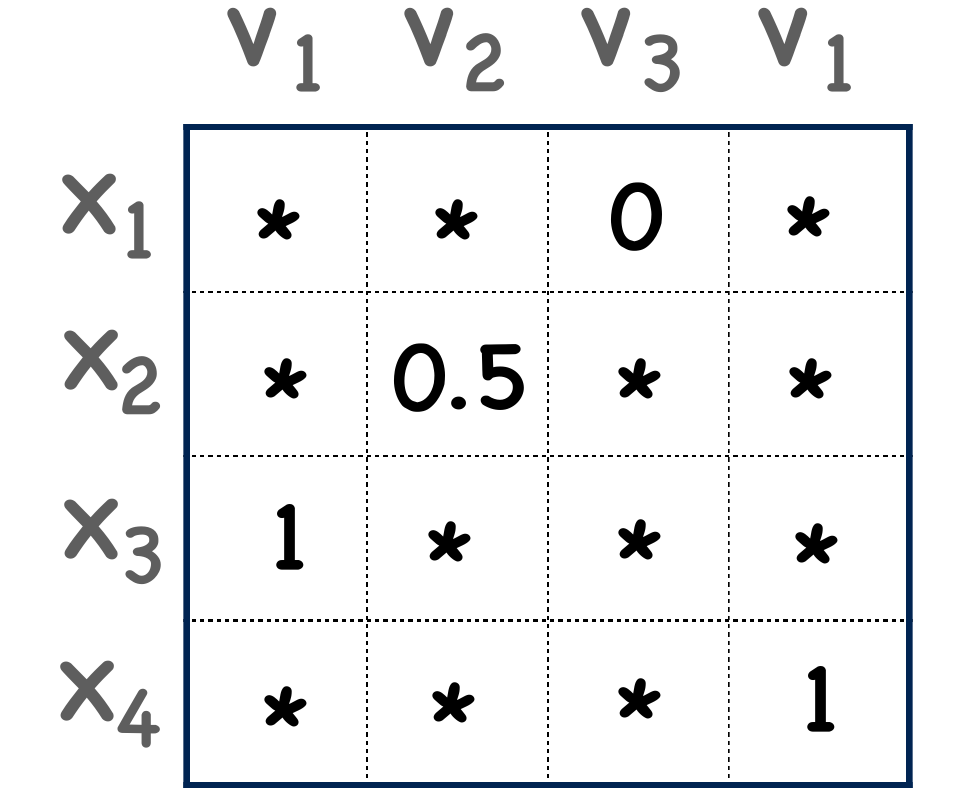}
        \caption{Sparse $P_{\Omega}(R)$}
        \label{fig:sparse}
    \end{subfigure}
    \hspace{10mm}
    \begin{subfigure}{0.26\textwidth}
        \centering
        \includegraphics[width=\textwidth]{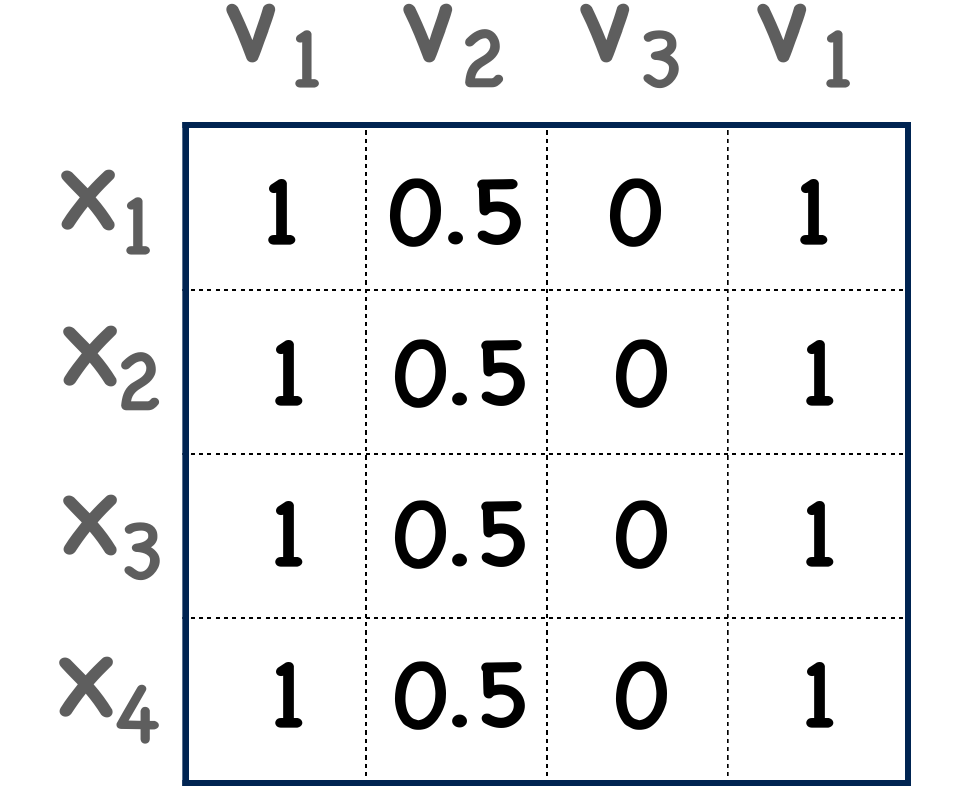}
        \caption{Context independent $R$.}
        \label{fig:context_independent}
    \end{subfigure}
    \hspace{10mm}
    \begin{subfigure}{0.26\textwidth}
        \centering
        \includegraphics[width=0.9\textwidth]{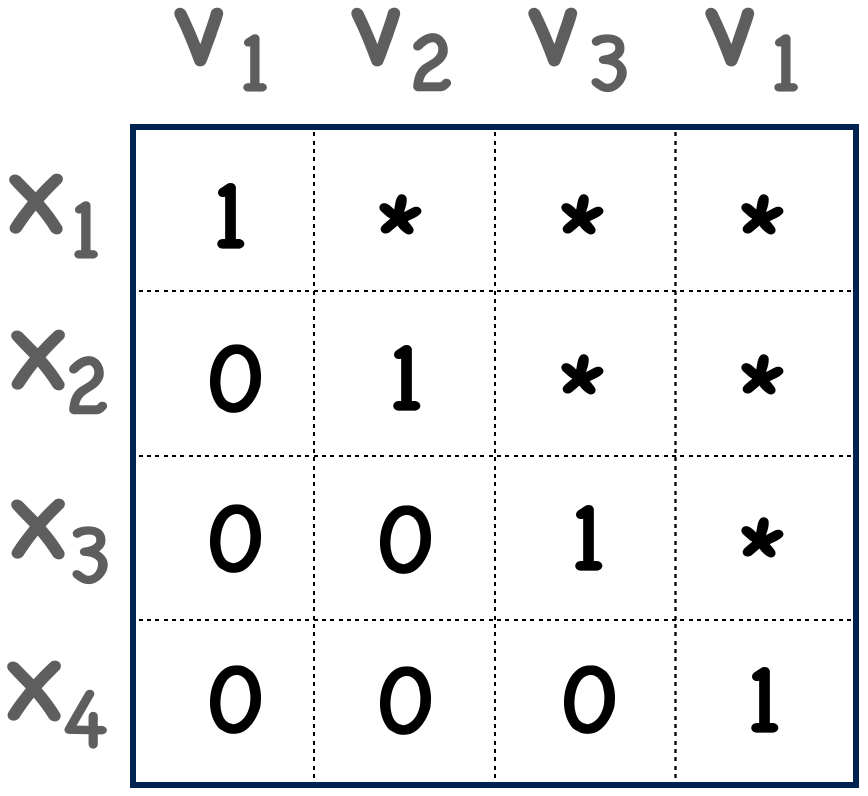}
        \caption{Full rank $P_{\Omega}(R)$.}
        \label{fig:full_rank}
    \end{subfigure}
    \hspace{6mm}

    \caption{(a) each row of a reward matrix $P_{\Omega}(R)$ has only $1$ known element, the minimal rank is $1$; (b) a reward matrix has rank $1$, when each token has a context independent reward; (c) a reward matrix is full rank for highly context dependent rewards e.g.\ for the case of identity matrix.}
    \label{fig:three_plots}
\end{figure}

\subsection{Low-Rank Autoregressive Reward Model}
In this section, we introduce RAD-Q (\Cref{fig:demo}), a low-rank parametrization of RAD, designed for efficient modeling of reward scores for next token candidates.

We revisit the language modeling ($Q$-style) parametrization for reward prediction \citep{liu-etal-2021-dexperts, krause-etal-2021-gedi-generative} and predict the scores for all next token candidates with a single forward pass through the backbone of a language model. In contrast to RAD-V, RAD-Q predicts the representation vector $h(x) \in \bbR^d$ given a prefix $x$ and uses output embeddings $e(v) \in \bbR^{d}$ to get the scores for all next token candidates.
We use the following RAD-Q parametrization, similar to the how Dueling Network \citep{pmlr-v48-wangf16, pmlr-v202-tang23h, han2024valueaugmentedsamplinglanguage} parametrizes the scores for the next tokens given the prefix:
\begin{equation}
\label{eq:softplus}
    \hat{r}_{\text{RAD-Q}}(v| x) = \underbrace{\hat{r}_b(h(x))}_{\text{baseline}}
    + \Delta \hat{r}(e(v) | h(x)),
\end{equation}
where the baseline predicts the score for the prefix $x$ and $\Delta \hat{r}$ predicts how observing a next token $v$ changes the score. Particularly, we use a  \emph{linear parametrization}: %
\begin{equation}
\label{eq:linear_arm}
    \hat{r}_{b}(v| x) := \langle h(x), w \rangle, \qquad  \Delta \hat{r}(e(v) | h(x)) := \langle h(x), We(v) \rangle.
\end{equation}
Here, we introduced two attribute-specific parameters: $w\in \bbR^{d}$ for modeling the baseline reward score of the prefix, and $W\in \bbR^{d \times d}$ to model marginal rewards for each next token candidate. 
\paragraph{Rank bottleneck of RAD-Q.}
Now it is clear that in contrast to RAD-V, RAD-Q (as defined in \Cref{eq:linear_arm}) performs a \emph{low-rank matrix factorization} of $P_{\Omega}(R)$:
\begin{equation}
\label{eq:rad_factors}
    \hat{R}_{\text{RAD-Q}} = H(w \mathbf{1}^T + WE) = HA,
\end{equation}
where we stack all context representations $x'$ into $H \in \mathbb{R}^{N \times d}$ and all next token representations into $WE \in \mathbb{R}^{d \times |V|}$, and $\mathbf{1}$ is a column $d$-vector of all ones. By the rank inequality, 
\begin{equation}
\rank(\hat{R}_{\text{RAD-Q}}) = \rank(HA) \le \min(\rank(H), \rank(A)) \le d,
\end{equation}
meaning that if $\minrank{R}>d$, RAD-Q \textbf{cannot} possibly perfectly reconstruct $P_{\Omega}(R)$ no matter how flexible $h(x)$ is. In terms of $\varepsilon$-rank, if $\minepsrank{R} > d$, then RAD-$Q$ cannot fit the data with training loss less than $\varepsilon$. In the language modeling literature, the rank bottleneck problem is known as the \emph{softmax bottleneck} \citep{yang2018breaking} 
and mitigation strategies are well-studied \citep{ganea2019breaking,chang-mccallum-2022-softmax}.

In the experiments (\Cref{sec:experiments}), we empirically demonstrate that 
distillation of RAD-V into low-rank RAD-Q
can match the performance of the more flexible RAD-V on the two standard controlled generation benchmarks. 

\subsection{RAD-Q training}

To train RAD-Q, we rely on the RAD approach to train a reward model. We split $x$ into a last token and remaining prefix: $x=[x',v]$. We pass $x'$ as input to the model, and $v$ indexes output embeddings~\eqref{eq:linear_arm}.
We consider two types of experiments: training RAD-Q on original responses from the dataset, and a distillation experiment, where we train RAD-Q to predict the scores of less efficient RAD.

For the first type of experiment, we train RAD-Q on the responses from the dataset using the weighted squared loss:
\begin{equation}
\label{eq:cumulative_loss}
    \cL(\hat{r}(v|x'), y, \lambda) = \lambda (\hat{r}(v|x') - y)^2
\end{equation}
For the second type of experiment, we train an RAD-Q student to approximate the less efficient RAD-V teacher $\tilde{r}(x)$ (a frozen trained RAD) using the \emph{distillation loss} \citep{hintonDistillingKnowledgeNeural2015}:
\begin{equation}
\label{eq:distil_loss}
    \cL_{\text{dstl}}(\hat{r}(v|x'), \tilde{r}(x)) = (\hat{r}(v|x') - \tilde{r}(x))^2.
\end{equation} 
A reward model can only observe a limited number of next tokens $v$ given $x$ during finetuning. While the loss defined above provides a positive signal for some tokens $v$, it might be beneficial to regularize the prediction for other (unrelated) tokens, including rare or unseen tokens. In our parametrization~\eqref{eq:softplus}, it is natural to push the predicted reward towards the baseline for unrelated tokens. 
We regularize the prediction of RAD-Q to be close on average to the prefix baseline by forcing $\Delta \hat{r}$ to be close to $0$ for randomly sampled token candidates:
\begin{equation}
\label{eq:reg_loss}
    \cL_{\text{reg}}(h(x)) = \mathbb{E}_{v' \sim \text{Uniform}[V]}\left[\Delta \hat{r}(e(v') | h(x))\right]^2,
\end{equation}
where we use one sample of $v'$ for each prefix position, sampling uniformly from the vocabulary.
Particularly, a regularized model can learn to \textit{abstain} by predicting the baseline score for each next token candidate, which will not change the distribution of a base model.

\section{Experiments}
\label{sec:experiments}

\subsection{Controlled generation}

We follow previous work \citep{deng_2023_rad, liu-etal-2021-dexperts} and evaluate RAD-Q on two controlled generation tasks: detoxification and sentiment control \footnote{Our code is available at  \url{https://github.com/serjtroshin/rad-q}
}.

In our experiments, we guide the decoding from a base model using a smaller finetuned reward model with the same tokenizer. Namely, we guide GPT-2-Large using a reward model finetuned from GPT-2-Small, and we guide the LLaMa-2-(7b/13b) \citep{touvron2023llama2openfoundation} base language model with a reward model finetuned from TinyLLaMa \citep{zhang2024tinyllama}. We finetune all parameters of the reward models except input/output embeddings, which remain frozen 
(in hope of improving generalization to unseen tokens).

We conduct experiments in two regimes: first, by distilling less efficient RAD-V \citep{deng_2023_rad} using $\cL_{\text{dstl}}$ 
 loss~\eqref{eq:distil_loss}; second, by training a reward model from scratch on the responses from the datasets using cumulative loss $\cL$~\eqref{eq:cumulative_loss}. 
 In both settings, we use additional regularization $\cL_{\text{reg}}$ by default. For evaluation, we perform guided decoding using \emph{top-k} sampling from the categorical distribution defined in~\eqref{eq:Softmax}, where \emph{top-k} candidates are selected taking \emph{k} largest logits of the base model at the current decoding step.

\subsection{Detoxification}

\begin{figure}[t]
  \centering
    \centering
    \includegraphics[width=\textwidth]{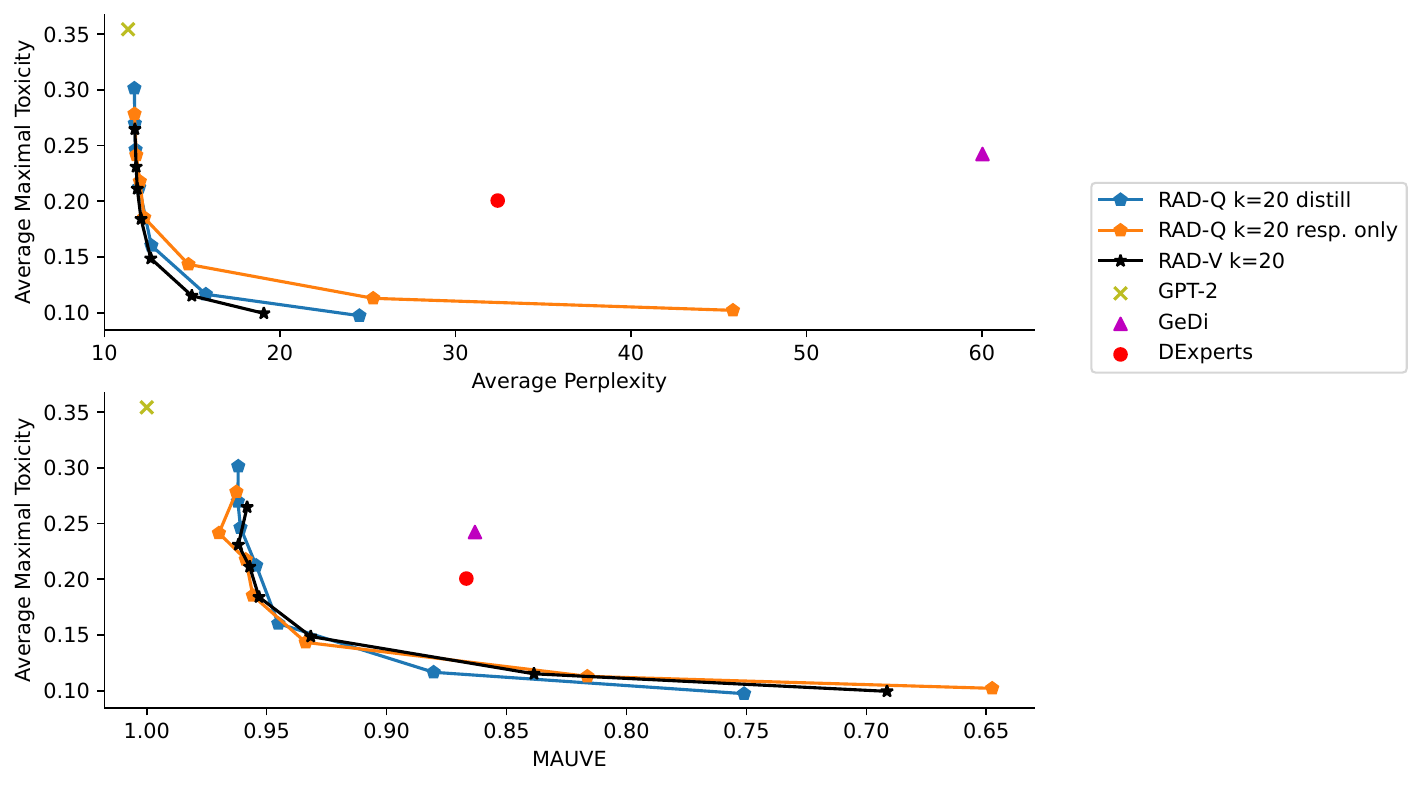}
    \caption{RAD-Q student (distill) shows comparable toxicity/fluency trade-off with the teacher RAD-V, where the RAD-Q student closely matches the performance of the teacher RAD. RAD-Q trained on original responses (RAD-Q resp. only) shows slightly worse fluency and similar toxicity level. We rerun the evaluation for RAD-V, GeDi and DExperts with an up-to-date Perspective API classifier. We include the results with other baselines from \cite{deng_2023_rad} in \Cref{fig:deng_et_al_toxicity_results} (see \Cref{sec:app_detox}). }
    \label{fig:toxicity}
  
\end{figure}

For the detoxification evaluation, we follow previous work \citep{deng_2023_rad, liu-etal-2021-dexperts} and evaluate samples from guided decoding given a \text{10k} subset \citep{liu-etal-2021-dexperts} of prompts from the RealToxicityPrompts dataset \citep{gehmanRealToxicityPromptsEvaluatingNeural2020}. We follow \citet{deng_2023_rad} and \citet{liu-etal-2021-dexperts} and finetune our model on \text{2M} pairs of text and continuous `toxicity' responses between $0$ and $1$ from the Jigsaw Unintended Bias in Toxicity Classification challenge \citep{jigsaw-unintended-bias-in-toxicity-classification}. Like previous work, we train our model on $7$ independent responses (`toxicity', `severe toxicity', `obscene',
            `identity attack', `insult', `threat', `sexual explicit') with different head parameters $w_i, W_i, i \in \{1, ..., 7\}$ for each sub-task. During decoding, we only use the `toxicity' predictor.
For the distillation experiment, we use the same dataset, and the released toxicity discriminator from \citet{deng_2023_rad} as a teacher.

During decoding, we sample $25$ continuations generating at most $20$ new tokens. To evaluate toxicity, we use an external closed-source toxicity classifier \emph{Perspective API} \citep{leesNewGenerationPerspective2022}, and following previous work \citep{deng_2023_rad, liu-etal-2021-dexperts}, we rely on the \emph{Maximal Average Toxicity} metric, which is the maximal toxicity score value over $25$ samples for a given prompt, averaged over the set of \text{10k} prompts. We also report \emph{Toxic Rate}, which is calculated as the probability that at least one out of 25 continuations is
toxic according to Perspective API (toxicity score > $0.5$); and \emph{Diversity} score, which is the average number of distinct $n$-grams normalized by the length of text \citep{liDeleteRetrieveGenerate2018}.
To evaluate the fluency of model generations, we follow previous work \citep{liu-etal-2021-dexperts, deng_2023_rad} and report the average perplexity of the GPT-2-XL when generating from the GPT-2-Large model; and the OLMo\footnote{\url{https://huggingface.co/allenai/OLMo-1B}} to evaluate the LLaMa family as in \cite{lovelace-etal-2024-diffusion}. As an additional fluency metric, we report MAUVE \citep{pillutla2021mauve} to measure the distance between unguided and guided generations (details in \Cref{app:mauve}). In the experiments, we will look at the toxicity/fluency trade-off, alternating the weight $\beta$ of the discriminator (see \Cref{tab:full_toxicity} and \Cref{tab:full_sentiment}). We expect to obtain a model with both low toxicity according to the Perspective API, and high fluency. 

Since toxicity scores from the Perspective API can change overtime, which can complicate the evaluation, in  \Cref{app:roberta} we evaluate our detoxification models with an open-weight toxicity classifier \footnote{\url{https://huggingface.co/nicholasKluge/ToxicityModel}}, where we observe the same relative results as with Perspective API scores.

\begin{figure}
    \centering
    \includegraphics[width=\textwidth]{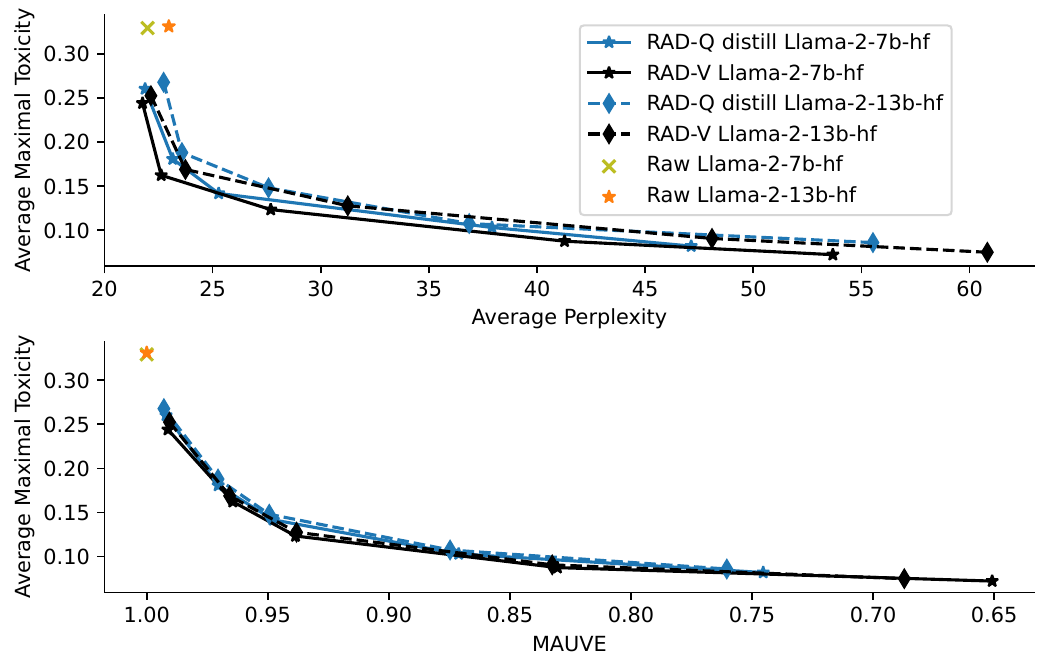}
    \caption{Detoxification results with Perspective API toxicity classifier and LLaMa-2 model. RAD-V and RAD-Q (distill) demonstrate similar performance \wrt two fluency metrics: average perplexity and MAUVE. Performance remains consistent across different models sizes of base LLaMa-2 model.}
    \label{fig:llama_toxicity_perspective}
\end{figure}

\subsection{Sentiment control}
\begin{figure}[t]
  \centering
  \begin{minipage}{0.9\textwidth}
  \includegraphics[width=\textwidth]{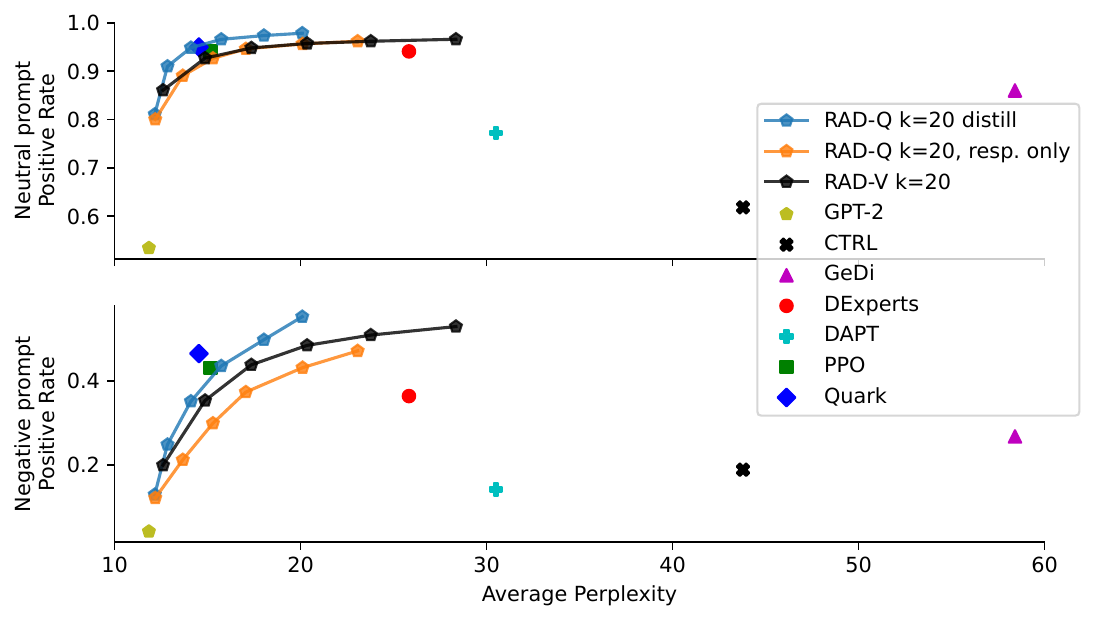}
  \label{fig:sentiment}
  \end{minipage}

  \centering
  \begin{minipage}{0.9\textwidth}
  \includegraphics[width=\textwidth]{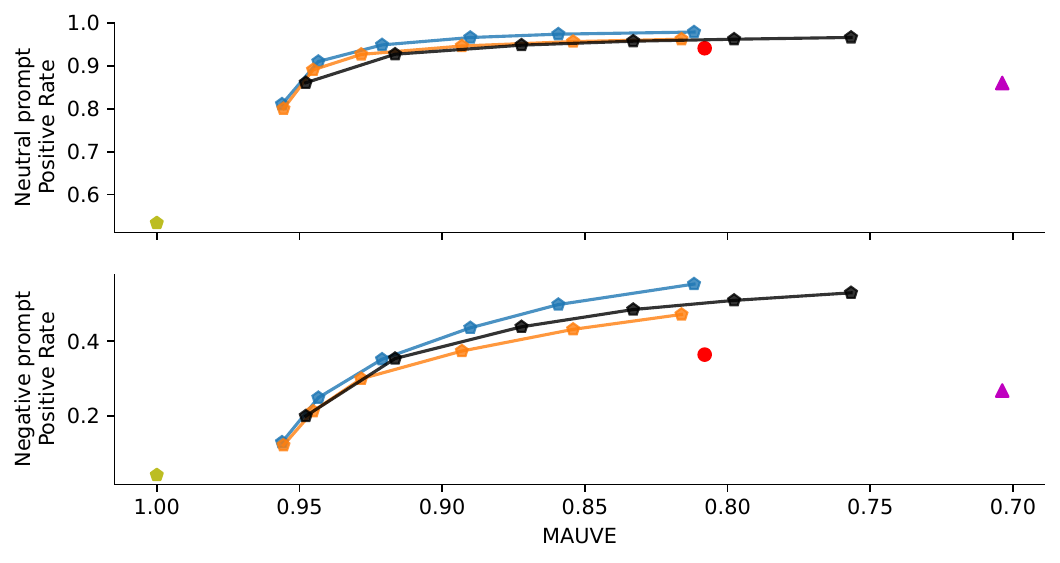}
  \caption{For the sentiment control task, RAD-Q trained on responses only lags slightly behind the RAD-V baseline, while student RAD-Q outperforms the teacher RAD-V model. For the plot with average perplexity, we include the results from \cite{deng_2023_rad} for other baselines for reference.}
  \label{fig:sentiment_main_text}
  \end{minipage}
\end{figure}

For sentiment control, we follow previous work \citep{liDeleteRetrieveGenerate2018, sudhakarTransformingDeleteRetrieve2019, liu-etal-2021-dexperts, deng_2023_rad} to evaluate the samples given a prompt from one of the three categories: $2.5K$ \emph{negative}, $5K$ \emph{neutral}, and $2.5K$ \emph{positive} prompts from OpenWebText \citep{gokaslan2019openwebtext}. To finetune RAD-Q on responses only, we follow \citet{deng_2023_rad} and finetune our model on millions of reviews from the Amazon Polarity \citep{zhang2015character} and 
 SST-2 \citep{socher2013recursive} datasets. 
To distill the sentiment discriminator of \citet{deng_2023_rad}, we use text examples from the Amazon Polarity dataset. Additional training details are provided in \Cref{app:training_details}.

For evaluation, we follow \citet{deng_2023_rad}, and use the average \emph{Positive Rate} metric \wrt the finetuned DistilBERT classifier \citep{sanh2019distilbert} provided via HuggingFace\footnote{\url{https://shorturl.at/9MqDp}}. As in the toxicity task, we use GPT-2-XL/OLMo and MAUVE to evaluate the fluency of the sampled continuations, and we expect to obtain a high Positive Rate and high fluency.

\subsection{Results}
To compare RAD-V and RAD-Q, we rely on the methodology of \citet{deng_2023_rad} and \citet{liu-etal-2021-dexperts}, and visualize the trade-off plots for both models varying the control parameter $\beta$. Namely, each point in the figure will represent two metrics: toxicity/sentiment along the vertical axis and fluency along the horizontal axis. From this plot, we can read, \eg, what fluency (perplexity/MAUVE) can be achieved for a given `target' toxicity. To compare two models, we compare their curves (in the same plot). Our hypothesis is that RAD-Q will perform similar to the more flexible RAD-V approach, meaning that the trade-off plots for these models will be close to each other.

\paragraph{Detoxification.}
For the detoxification task (\Cref{fig:toxicity} with GPT-2, \Cref{fig:llama_toxicity_perspective} with LLaMa-2), our efficient student (RAD-Q) closely follows the RAD-V teacher for toxicity control/fluency trade-off. 
We observe that RAD-Q trained on  responses only shows slightly worse fluency \wrt average perplexity for lower levels of toxicity. 
For completeness, in \Cref{fig:deng_et_al_toxicity_results}, we include the results for other baselines from \citet{deng_2023_rad} computed for an older version of Perspective API.
For guided decoding from the LLaMa-2-(7b/13b), we observe that again RAD-Q closely follows RAD-V in terms of toxicity/fluency trade-off (see \Cref{fig:llama_toxicity_perspective} in \Cref{app:detox_perspective}).

\paragraph{Sentiment control.}
From the results on the sentiment control task in \Cref{fig:sentiment_main_text}, we observe that the RAD-Q student model shows slightly better trade-off than the RAD-V teacher model, closely following approaches that require training using feedback from the evaluation pipeline \citep[Quark]{luQUARKControllableText2022}, \citep[PPO]{ stiennonLearningSummarizeHuman2020}. Again, RAD-Q trained on original responses slightly lags behind but still performs competitively compared to other guided decoding baselines.

\paragraph{Summary.}
 First, we observe that distilled RAD-Q can match or even outperform RAD-V, which confirms that $Q$-style parametrization is expressive enough for our controlled generation scenarios. Second, distilling the RAD-V teacher into the RAD-Q student results in slightly higher quality compared to training RAD-Q on original responses. One difference is that when training from data, we will see short contexts multiple times with different reward responses and must implicitly converge to their average, while in distillation, the teacher already performs this compression and provides a single deterministic target $\hat{r}(v|x)$ for every context $(x,v)$. We conjecture that this may lead to better-trained distilled models.

\begin{figure}[tbp]
  \centering
  \subfloat[Rank analysis]{
    \includegraphics[width=0.47\textwidth]{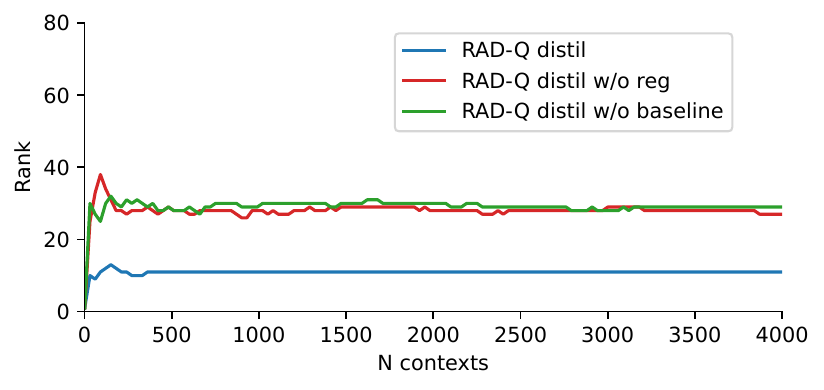}
 }
 \hfill
 \subfloat[Detoxification task]{
    \includegraphics[width=0.47\textwidth]{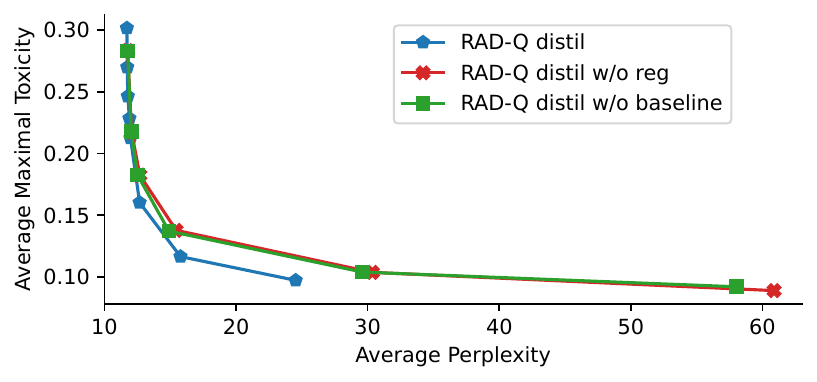}
  }
 9
  \caption{Ablation experiment for distilled RAD-Q, on the detoxification task with top-$k$=$20$. On the right, we observe that regularization towards the baseline results in better fluency of generated samples. On the left, we observe that regularization lowers the rank of the model's outputs $\rank (\hat{R}_{\text{RAD-Q}})$.
  }
  \label{fig:ablation}
\end{figure}

\subsection{Ablation}
In this section, we investigate the effect of adding the baseline component \Cref{eq:linear_arm} and of regularization \Cref{eq:reg_loss}. In \Cref{fig:ablation}, we experiment with the distilled version of RAD-Q and observe that turning off regularization, or further removing the baseline from the parametrization results in still adequate but slightly worse fluency as measured by perplexity, and a comparable toxicity decrease. By further analyzing the ranks of $R_{\text{RAD-Q}}$ with and without regularization, we observe that regularization effectively decreases the rank of $R_{\text{RAD-Q}}$, which might explain the higher fluency of regularized models. Particularly, strong regularization would result in the model always predicting the baseline score for each of the next tokens (corresponding to the rank-1 output), which does not modify the original distribution of the model (the best fluency).

\begin{figure}[t]
  \centering
  \begin{minipage}{0.45\textwidth}
    \centering
    \begin{tabular}{ll}
        \toprule
        \textbf{Model} & \textbf{N calls} \\
        \midrule
        GeDi \citep{krause-etal-2021-gedi-generative} & 1 \\
        DExperts \citep{liu-etal-2021-dexperts} & 2 \\
        RAD-V \citep{deng_2023_rad} & $k$ \\
        \mydashedcline{1-2}
        RAD-Q (Ours) & 1 \\
        \bottomrule
    \end{tabular}
    \captionof{table}{Number of input tokens a discriminator model needs to process for a single decoding step with $k$ next token candidates. All included models are based on the unidirectional Transformer \citep{vaswani_transformer} and support the caching of prefix activations.}
    \label{tab:passes_table}
  \end{minipage}
  \hfill
  \begin{minipage}{0.47\textwidth}
    \centering
    \includegraphics[width=\textwidth]{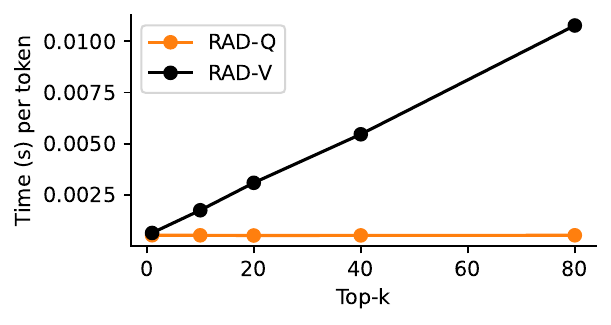}
    \caption{RAD-V processes the $k$ next token candidates separately as input requiring more time compared to RAD-Q, which relies on the output layer to obtain the scores for all next tokens.}
    \label{fig:time_per_token}
  \end{minipage}
\end{figure}

\subsection{Efficiency}
We consider using a reward model to compute the scores for $k$ candidate tokens at each of $L$ steps of decoding. Similar to RAD-V \citep{deng_2023_rad}, RAD-Q is based on the \emph{unidirectional} Transformer architecture \citep{vaswani_transformer}, which means that we can cache the prefix activations during decoding. To compute the prediction for $k$ next token candidates $v$ given a prefix $x$, RAD-V needs to pass $k$ next tokens as \emph{input} to the Transformer model, thus RAD-V processes $O(Lk)$ tokens during decoding. In contrast, RAD-Q only processes $O(L)$ tokens as input to the Transformer model and relies on the output layer to efficiently compute the scores for all next token candidates.  In \Cref{tab:passes_table}, we summarize how many tokens  external expert models process during top-$k$ decoding.
 In \Cref{fig:time_per_token}, we measure the time per generated token when running the decoding for the toxicity task with RAD-Q and RAD-V \citep{deng_2023_rad} on a single RTX A6000 GPU.

\section{Related Work}
\label{sec:related}
There are multiple approaches that investigate how to finetune a language model using attribute-conditioned data (desired/undesired examples). \citet{keskarCTRLConditionalTransformer2019} finetunes a language model using control prompts. More recent approaches \citep{schulmanProximalPolicyOptimization2017, stiennonLearningSummarizeHuman2020, luQUARKControllableText2022} perform finetuning while regularizing the weights of the model to stay close to pretrained weights. %
Despite the efficiency of decoding, these methods might require more resources for finetuning if the language model is large, or might even be unusable if we only have access to the top-$k$ logits of base language model via an API. Unlike finetuning, alternative approaches keep the language model untouched and use external models to guide the decoding from the base language model. 
\citet{dathathriPlugPlayLanguage2019} use the gradients from a discriminator to modify the prefix activations of the base model during decoding. However, gradient-based methods are costly to use during decoding since they require backpropagating through the large base model.

Closest to our work are \emph{gradient-free} guided decoding methods, where we have access to the frozen base language model and use external models to guide the sampling process from the base model. Among Q-parametrized models, GeDi \citep{krause-etal-2021-gedi-generative} uses class-conditioned language models as discriminators to augment the decoding and efficiently compute the scores for next token candidates. DExperts \citep{liu-etal-2021-dexperts} improves the quality of GeDi introducing an ensemble of two class-conditioned language models finetuned on desired and undesired data. \citet{caoSystematicRectificationLanguage2022} finetune a $Q$-style model to reduce the probability of reaching undesired terminal states. 
In concurrent work, \citet{xu2025genarm} propose a $Q$-style model, where they show that Q and V style models are equivalent, assuming the logits come from a universal approximator. While true, this analysis omits the rank bottleneck present in practical models, which we analyze in this work. We hope our work will inform this line of work of potential limitations of $Q$-style parameterized models.

$V$-style models are often used to re-rank the intermediate hypotheses \eg for best-of-n sampling, which however requires a large pool of \textit{completed} hypotheses \citep{sun2024fast}. \citet{deng_2023_rad}, \citet{sitdikovClassifiersAreBetter2022} argue to use discriminator models akin to \citet{yang-klein-2021-fudge} to guide the intermediate outputs \textit{during} the decoding, whereas \citet{deng_2023_rad} propose an effective $V$-style parametrized reward model trained from labeled data examples. \citet{sitdikovClassifiersAreBetter2022, dekoninckControlledTextGeneration2023} use available \emph{bidirectional} Transformers to guide the base language model, which, however, requires recomputing all prefix tokens at each decoding step. To tackle this issue, RAD-V \citep{deng_2023_rad} 
proposes a \emph{unidirectional} model suitable for caching of prefix activations. Among RL-based approaches, \citet{mudgal2024controlled}, \citet{chakraborty2024transfer} use a $V$-style model although at the same time they rely on the losses designed for a $Q$-style model (e.g.\ CD-Q from \citet{mudgal2024controlled}). To the best of our knowledge, there is little attention to the implied efficiency-quality trade-off that we study in our work. The closest to our analysis, \citet{han2024valueaugmentedsamplinglanguage} compare both parametrizations in relation to language tasks, where they empirically observe that $V$-style parametrization outperforms $Q$-style parametrization.

 To summarize, we complement the previous work, by zooming in into the parametrization of an autoregressive reward model. We highlight the trade-off between efficiency and expressiveness of a reward model, and showcase that 1) in theory there is a gap in expressivity between $Q$-style and $V$-style models due to rank bottleneck, 2) for the tasks and datasets we consider, higher rank-expressiveness can be traded for higher efficiency without quality drop when using distillation. We hope that our analysis will inform future work on the design choices of autoregressive reward models.

\section{Conclusion}
\label{sec:concl}

We review the recently proposed RAD-V approach of training a reward model for the guided decoding, and we reformulate it as the incomplete reward matrix learning problem. In the light of the rank analysis of the reward matrix, we observe that the high flexibility of RAD-V might not overweight its lower efficiency during decoding. We revisit the low-rank parametrization style of reward models in application to RAD, and demonstrate the effectiveness of a more efficient low-rank RAD-Q parametrization. We thus bridge the gap between two paradigms of training external expert models, demonstrating that we can have both efficient and effective controlled generation. At the same time, we precaution from indiscriminately choosing low-rank parametrization by highlighting the cases when the incomplete reward matrix has higher minimal rank.

\section*{Acknowledgments}
This publication is part of the project VI.Veni.212.228 of the research program
`Veni', which is financed by the Dutch Research Council (NWO); and is part of
‘Hybrid Intelligence: augmenting human intellect’
(https://hybrid-intelligence-centre.nl) with project number 024.004.022 of the
research program `Gravitation' which is (partly) financed by the Dutch Research
Council (NWO).

We thank Wilker Aziz, Bryan Eikema, Caio Corro, and members of LTL and CLTL for fruitful discussions and feedback. We also thank the Perspective API team for increasing the API quota for us.

\bibliography{paper}

\begin{thebibliography}{62}
\providecommand{\natexlab}[1]{#1}
\providecommand{\url}[1]{\texttt{#1}}
\expandafter\ifx\csname urlstyle\endcsname\relax
  \providecommand{\doi}[1]{doi: #1}\else
  \providecommand{\doi}{doi: \begingroup \urlstyle{rm}\Url}\fi

\bibitem[Cao et~al.(2022)Cao, Fatemi, Cheung, and Shabanian]{caoSystematicRectificationLanguage2022}
Meng Cao, Mehdi Fatemi, Jackie~CK Cheung, and Samira Shabanian.
\newblock Systematic {{Rectification}} of {{Language Models}} via {{Dead-end Analysis}}.
\newblock In \emph{ICLR}, 2022.
\newblock URL \url{https://openreview.net/forum?id=k8_yVW3Wqln}.

\bibitem[Chakraborty et~al.(2024)Chakraborty, Ghosal, Yin, Manocha, Wang, Bedi, and Huang]{chakraborty2024transfer}
Souradip Chakraborty, Soumya~Suvra Ghosal, Ming Yin, Dinesh Manocha, Mengdi Wang, Amrit Bedi, and Furong Huang.
\newblock Transfer q-star : Principled decoding for {LLM} alignment.
\newblock In \emph{NeurIPS}, 2024.
\newblock URL \url{https://openreview.net/forum?id=5PrShrKxoX}.

\bibitem[Chang \& McCallum(2022)Chang and McCallum]{chang-mccallum-2022-softmax}
Haw-Shiuan Chang and Andrew McCallum.
\newblock Softmax bottleneck makes language models unable to represent multi-mode word distributions.
\newblock In \emph{Proceedings of the 60th Annual Meeting of the Association for Computational Linguistics (Volume 1: Long Papers)}, May 2022.
\newblock \doi{10.18653/v1/2022.acl-long.554}.
\newblock URL \url{https://aclanthology.org/2022.acl-long.554}.

\bibitem[cjadams et~al.(2019)cjadams, Daniel, inversion, Jeffrem, Lucas, Lucy, and nithum]{jigsaw-unintended-bias-in-toxicity-classification}
cjadams, Borkan Daniel, inversion, Sorensen Jeffrem, Dixon Lucas, Vasserman Lucy, and nithum.
\newblock Jigsaw unintended bias in toxicity classification, 2019.
\newblock URL \url{https://kaggle.com/competitions/jigsaw-unintended-bias-in-toxicity-classification}.

\bibitem[Corrêa(2023)]{nicholas22aira}
Nicholas~Kluge Corrêa.
\newblock Aira, 2023.
\newblock URL \url{https://github.com/Nkluge-correa/Aira}.

\bibitem[Dathathri et~al.(2019)Dathathri, Madotto, Lan, Hung, Frank, Molino, Yosinski, and Liu]{dathathriPlugPlayLanguage2019}
Sumanth Dathathri, Andrea Madotto, Janice Lan, Jane Hung, Eric Frank, Piero Molino, Jason Yosinski, and Rosanne Liu.
\newblock Plug and {{Play Language Models}}: {{A Simple Approach}} to {{Controlled Text Generation}}.
\newblock In \emph{ICLR}, 2019.
\newblock URL \url{https://openreview.net/forum?id=H1edEyBKDS}.

\bibitem[Dekoninck et~al.(2023)Dekoninck, Fischer, {Beurer-Kellner}, and Vechev]{dekoninckControlledTextGeneration2023}
Jasper Dekoninck, Marc Fischer, Luca {Beurer-Kellner}, and Martin Vechev.
\newblock Controlled {{Text Generation}} via {{Language Model Arithmetic}}.
\newblock In \emph{ICLR}, 2023.
\newblock URL \url{https://openreview.net/forum?id=SLw9fp4yI6}.

\bibitem[Deng \& Raffel(2023)Deng and Raffel]{deng_2023_rad}
Haikang Deng and Colin Raffel.
\newblock Reward-augmented decoding: {{Efficient}} controlled text generation with a unidirectional reward model.
\newblock In \emph{EMNLP}, 2023.
\newblock \doi{10.18653/v1/2023.emnlp-main.721}.
\newblock URL \url{https://openreview.net/forum?id=I13VHLJjLO&noteId=9RqjX18mqA}.

\bibitem[Dulac{-}Arnold et~al.(2015)Dulac{-}Arnold, Evans, Sunehag, and Coppin]{rl_highdimention}
Gabriel Dulac{-}Arnold, Richard Evans, Peter Sunehag, and Ben Coppin.
\newblock Reinforcement learning in large discrete action spaces.
\newblock \emph{CoRR}, abs/1512.07679, 2015.
\newblock URL \url{http://arxiv.org/abs/1512.07679}.

\bibitem[Fan et~al.(2018)Fan, Lewis, and Dauphin]{fan-etal-2018-hierarchical}
Angela Fan, Mike Lewis, and Yann Dauphin.
\newblock Hierarchical neural story generation.
\newblock In \emph{ACL}, 2018.
\newblock \doi{10.18653/v1/P18-1082}.
\newblock URL \url{https://aclanthology.org/P18-1082}.

\bibitem[Finlayson et~al.(2024)Finlayson, Hewitt, Koller, Swayamdipta, and Sabharwal]{finlayson2024closing}
Matthew Finlayson, John Hewitt, Alexander Koller, Swabha Swayamdipta, and Ashish Sabharwal.
\newblock Closing the curious case of neural text degeneration.
\newblock In \emph{ICLR}, 2024.
\newblock URL \url{https://openreview.net/forum?id=dONpC9GL1o}.

\bibitem[Ganea et~al.(2019)Ganea, Gelly, B{\'e}cigneul, and Severyn]{ganea2019breaking}
Octavian Ganea, Sylvain Gelly, Gary B{\'e}cigneul, and Aliaksei Severyn.
\newblock Breaking the softmax bottleneck via learnable monotonic pointwise non-linearities.
\newblock In \emph{ICML}, 2019.
\newblock URL \url{https://proceedings.mlr.press/v97/ganea19a.html}.

\bibitem[Ganguli et~al.(2022)Ganguli, Lovitt, Kernion, Askell, and et~al.]{ganguliRedTeamingLanguage2022}
Deep Ganguli, Liane Lovitt, Jackson Kernion, Amanda Askell, and Bai et~al.
\newblock Red {{Teaming Language Models}} to {{Reduce Harms}}: {{Methods}}, {{Scaling Behaviors}}, and {{Lessons Learned}}, November 2022.
\newblock URL \url{https://arxiv.org/abs/2209.07858}.

\bibitem[Gehman et~al.(2020)Gehman, Gururangan, Sap, Choi, and Smith]{gehmanRealToxicityPromptsEvaluatingNeural2020}
Samuel Gehman, Suchin Gururangan, Maarten Sap, Yejin Choi, and Noah~A. Smith.
\newblock {{RealToxicityPrompts}}: {{Evaluating Neural Toxic Degeneration}} in {{Language Models}}.
\newblock In \emph{EMNLP findings}, Online, 2020.
\newblock \doi{10.18653/v1/2020.findings-emnlp.301}.
\newblock URL \url{https://aclanthology.org/2020.findings-emnlp.301/}.

\bibitem[Gokaslan \& Cohen(2019)Gokaslan and Cohen]{gokaslan2019openwebtext}
Aaron Gokaslan and Vanya Cohen.
\newblock {OpenWebText Corpus}.
\newblock \url{http://Skylion007.github.io/OpenWebTextCorpus}, 2019.
\newblock Accessed: 2024-06-30.

\bibitem[Han et~al.(2024)Han, Shenfeld, Srivastava, Kim, and Agrawal]{han2024valueaugmentedsamplinglanguage}
Seungwook Han, Idan Shenfeld, Akash Srivastava, Yoon Kim, and Pulkit Agrawal.
\newblock Value augmented sampling for language model alignment and personalization, 2024.
\newblock URL \url{https://arxiv.org/abs/2405.06639}.

\bibitem[Hastie et~al.(2015)Hastie, Mazumder, Lee, and Zadeh]{hastie2014matrixcompletionlowranksvd}
Trevor Hastie, Rahul Mazumder, Jason~D. Lee, and Reza Zadeh.
\newblock Matrix completion and low-rank svd via fast alternating least squares.
\newblock \emph{JMLR}, 2015.
\newblock URL \url{http://jmlr.org/papers/v16/hastie15a.html}.

\bibitem[He et~al.(2023)He, Zhang, Wang, Kumar, Cho, Glass, and Tsvetkov]{He_Zhang_Wang_Kumar_Cho_Glass_Tsvetkov_2023}
Tianxing He, Jingyu Zhang, Tianle Wang, Sachin Kumar, Kyunghyun Cho, James Glass, and Yulia Tsvetkov.
\newblock On the blind spots of model-based evaluation metrics for text generation.
\newblock In \emph{ACL}, 2023.
\newblock \doi{10.18653/v1/2023.acl-long.674}.
\newblock URL \url{https://aclanthology.org/2023.acl-long.674}.

\bibitem[Hinton et~al.(2015)Hinton, Vinyals, and Dean]{hintonDistillingKnowledgeNeural2015}
Geoffrey Hinton, Oriol Vinyals, and Jeff Dean.
\newblock Distilling the {{Knowledge}} in a {{Neural Network}}.
\newblock In \emph{NeurIPS 2014 Deep Learning Workshop}, March 2015.
\newblock \doi{10.48550/arXiv.1503.02531}.
\newblock URL \url{https://arxiv.org/abs/1503.02531}.

\bibitem[Holtzman et~al.(2020)Holtzman, Buys, Du, Forbes, and Choi]{Holtzman2020The}
Ari Holtzman, Jan Buys, Li~Du, Maxwell Forbes, and Yejin Choi.
\newblock The curious case of neural text degeneration.
\newblock In \emph{ICLR}, 2020.
\newblock URL \url{https://openreview.net/forum?id=rygGQyrFvH}.

\bibitem[Hu et~al.(2022)Hu, yelong shen, Wallis, Allen-Zhu, Li, Wang, Wang, and Chen]{hu2022lora}
Edward~J Hu, yelong shen, Phillip Wallis, Zeyuan Allen-Zhu, Yuanzhi Li, Shean Wang, Lu~Wang, and Weizhu Chen.
\newblock Lo{RA}: Low-rank adaptation of large language models.
\newblock In \emph{ICLR}, 2022.
\newblock URL \url{https://openreview.net/forum?id=nZeVKeeFYf9}.

\bibitem[Ji et~al.(2023)Ji, Liu, Dai, Pan, Zhang, Bian, Chen, Sun, Wang, and Yang]{NEURIPS2023_beavertails}
Jiaming Ji, Mickel Liu, Josef Dai, Xuehai Pan, Chi Zhang, Ce~Bian, Boyuan Chen, Ruiyang Sun, Yizhou Wang, and Yaodong Yang.
\newblock Beavertails: Towards improved safety alignment of llm via a human-preference dataset.
\newblock In A.~Oh, T.~Naumann, A.~Globerson, K.~Saenko, M.~Hardt, and S.~Levine (eds.), \emph{Advances in Neural Information Processing Systems}, volume~36, pp.\  24678--24704. Curran Associates, Inc., 2023.
\newblock URL \url{https://proceedings.neurips.cc/paper_files/paper/2023/file/4dbb61cb68671edc4ca3712d70083b9f-Paper-Datasets_and_Benchmarks.pdf}.

\bibitem[Jiang et~al.(2023)Jiang, Sablayrolles, Mensch, Bamford, Chaplot, de~las Casas, Bressand, Lengyel, Lample, Saulnier, Lavaud, Lachaux, Stock, Scao, Lavril, Wang, Lacroix, and Sayed]{jiang2023mistral}
Albert~Q. Jiang, Alexandre Sablayrolles, Arthur Mensch, Chris Bamford, Devendra~Singh Chaplot, Diego de~las Casas, Florian Bressand, Gianna Lengyel, Guillaume Lample, Lucile Saulnier, Lélio~Renard Lavaud, Marie-Anne Lachaux, Pierre Stock, Teven~Le Scao, Thibaut Lavril, Thomas Wang, Timothée Lacroix, and William~El Sayed.
\newblock Mistral 7b, 2023.
\newblock URL \url{https://arxiv.org/abs/2310.06825}.

\bibitem[Keskar et~al.(2019)Keskar, McCann, Varshney, Xiong, and Socher]{keskarCTRLConditionalTransformer2019}
Nitish~Shirish Keskar, Bryan McCann, Lav~R. Varshney, Caiming Xiong, and Richard Socher.
\newblock {{CTRL}}: {{A Conditional Transformer Language Model}} for {{Controllable Generation}}.
\newblock \emph{arXiv}, 2019.
\newblock \doi{10.48550/arXiv.1909.05858}.
\newblock URL \url{https://arxiv.org/abs/1909.05858v2}.

\bibitem[Kingma \& Ba(2015)Kingma and Ba]{kingma:adam}
Diederick~P. Kingma and Jimmy Ba.
\newblock Adam: A method for stochastic optimization.
\newblock In \emph{ICLR}, 2015.
\newblock URL \url{https://arxiv.org/abs/1412.6980}.

\bibitem[Krause et~al.(2021)Krause, Gotmare, McCann, Keskar, Joty, Socher, and Rajani]{krause-etal-2021-gedi-generative}
Ben Krause, Akhilesh~Deepak Gotmare, Bryan McCann, Nitish~Shirish Keskar, Shafiq Joty, Richard Socher, and Nazneen~Fatema Rajani.
\newblock {G}e{D}i: Generative discriminator guided sequence generation.
\newblock In \emph{EMNLP findings}, 2021.
\newblock \doi{10.18653/v1/2021.findings-emnlp.424}.
\newblock URL \url{https://aclanthology.org/2021.findings-emnlp.424}.

\bibitem[Lees et~al.(2022)Lees, Tran, Tay, Sorensen, Gupta, Metzler, and Vasserman]{leesNewGenerationPerspective2022}
Alyssa Lees, Vinh~Q. Tran, Yi~Tay, Jeffrey Sorensen, Jai Gupta, Donald Metzler, and Lucy Vasserman.
\newblock {A New Generation of Perspective API: Efficient Multilingual Character-level Transformers}.
\newblock In \emph{KDD}, 2022.
\newblock ISBN 9781450393850.
\newblock \doi{10.1145/3534678.3539147}.
\newblock URL \url{https://doi.org/10.1145/3534678.3539147}.

\bibitem[Li et~al.(2018)Li, Jia, He, and Liang]{liDeleteRetrieveGenerate2018}
Juncen Li, Robin Jia, He~He, and Percy Liang.
\newblock Delete, {{Retrieve}}, {{Generate}}: A {{Simple Approach}} to {{Sentiment}} and {{Style Transfer}}.
\newblock In \emph{NAACL}, 2018.
\newblock \doi{10.18653/v1/N18-1169}.
\newblock URL \url{https://aclanthology.org/N18-1169/}.

\bibitem[Liu et~al.(2021)Liu, Sap, Lu, Swayamdipta, Bhagavatula, Smith, and Choi]{liu-etal-2021-dexperts}
Alisa Liu, Maarten Sap, Ximing Lu, Swabha Swayamdipta, Chandra Bhagavatula, Noah~A. Smith, and Yejin Choi.
\newblock {DE}xperts: Decoding-time controlled text generation with experts and anti-experts.
\newblock In \emph{ACL}, 2021.
\newblock \doi{10.18653/v1/2021.acl-long.522}.
\newblock URL \url{https://aclanthology.org/2021.acl-long.522}.

\bibitem[Lovelace et~al.(2024)Lovelace, Kishore, Chen, and Weinberger]{lovelace-etal-2024-diffusion}
Justin Lovelace, Varsha Kishore, Yiwei Chen, and Kilian Weinberger.
\newblock Diffusion guided language modeling.
\newblock In \emph{ACL findings}, 2024.
\newblock URL \url{https://aclanthology.org/2024.findings-acl.887}.

\bibitem[Lu et~al.(2022)Lu, Welleck, Hessel, Jiang, Qin, West, Ammanabrolu, and Choi]{luQUARKControllableText2022}
Ximing Lu, Sean Welleck, Jack Hessel, Liwei Jiang, Lianhui Qin, Peter West, Prithviraj Ammanabrolu, and Yejin Choi.
\newblock {{QUARK}}: {{Controllable Text Generation}} with {{Reinforced Unlearning}}.
\newblock In \emph{NeurIPS}, 2022.
\newblock URL \url{https://openreview.net/forum?id=5HaIds3ux5O}.

\bibitem[Mazumder et~al.(2010)Mazumder, Hastie, and Tibshirani]{JMLR:v11:mazumder10a}
Rahul Mazumder, Trevor Hastie, and Robert Tibshirani.
\newblock Spectral regularization algorithms for learning large incomplete matrices.
\newblock \emph{Journal of Machine Learning Research}, 11\penalty0 (80):\penalty0 2287--2322, 2010.
\newblock URL \url{http://jmlr.org/papers/v11/mazumder10a.html}.

\bibitem[Mudgal et~al.(2024)Mudgal, Lee, Ganapathy, Li, Wang, Huang, Chen, Cheng, Collins, Strohman, Chen, Beutel, and Beirami]{mudgal2024controlled}
Sidharth Mudgal, Jong Lee, Harish Ganapathy, YaGuang Li, Tao Wang, Yanping Huang, Zhifeng Chen, Heng-Tze Cheng, Michael Collins, Trevor Strohman, Jilin Chen, Alex Beutel, and Ahmad Beirami.
\newblock Controlled decoding from language models.
\newblock In \emph{ICML}, 2024.
\newblock URL \url{https://openreview.net/forum?id=bVIcZb7Qa0}.

\bibitem[Nan(2009)]{LowRankMatrix2009}
Feng Nan.
\newblock \emph{Low Rank Matrix Completion}.
\newblock Thesis, Massachusetts Institute of Technology, 2009.

\bibitem[OpenAI(2024)]{openai2024gpt4}
OpenAI.
\newblock {GPT-4 Technical Report}, 2024.
\newblock URL \url{https://arxiv.org/abs/2303.08774}.

\bibitem[Ouyang et~al.(2022)Ouyang, Wu, Jiang, Almeida, Wainwright, Mishkin, Zhang, Agarwal, Slama, Ray, Schulman, Hilton, Kelton, Miller, Simens, Askell, Welinder, Christiano, Leike, and Lowe]{ouyangTrainingLanguageModels2022}
Long Ouyang, Jeffrey Wu, Xu~Jiang, Diogo Almeida, Carroll Wainwright, Pamela Mishkin, Chong Zhang, Sandhini Agarwal, Katarina Slama, Alex Ray, John Schulman, Jacob Hilton, Fraser Kelton, Luke Miller, Maddie Simens, Amanda Askell, Peter Welinder, Paul~F Christiano, Jan Leike, and Ryan Lowe.
\newblock Training language models to follow instructions with human feedback.
\newblock In \emph{NeurIPS}, 2022.
\newblock URL \url{https://openreview.net/forum?id=TG8KACxEON}.

\bibitem[Pillutla et~al.(2021)Pillutla, Swayamdipta, Zellers, Thickstun, Welleck, Choi, and Harchaoui]{pillutla2021mauve}
Krishna Pillutla, Swabha Swayamdipta, Rowan Zellers, John Thickstun, Sean Welleck, Yejin Choi, and Zaid Harchaoui.
\newblock {MAUVE}: Measuring the gap between neural text and human text using divergence frontiers.
\newblock In A.~Beygelzimer, Y.~Dauphin, P.~Liang, and J.~Wortman Vaughan (eds.), \emph{Advances in Neural Information Processing Systems}, 2021.
\newblock URL \url{https://openreview.net/forum?id=Tqx7nJp7PR}.

\bibitem[Rafailov et~al.(2023)Rafailov, Sharma, Mitchell, Manning, Ermon, and Finn]{rafailovDirectPreferenceOptimization2023}
Rafael Rafailov, Archit Sharma, Eric Mitchell, Christopher~D. Manning, Stefano Ermon, and Chelsea Finn.
\newblock Direct {{Preference Optimization}}: {{Your Language Model}} is {{Secretly}} a {{Reward Model}}.
\newblock In \emph{NeurIPS}, 2023.
\newblock URL \url{https://openreview.net/forum?id=HPuSIXJaa9}.

\bibitem[Sanh et~al.(2019)Sanh, Debut, Chaumond, and Wolf]{sanh2019distilbert}
Victor Sanh, Lysandre Debut, Julien Chaumond, and Thomas Wolf.
\newblock Distilbert, a distilled version of {BERT}: smaller, faster, cheaper and lighter.
\newblock In \emph{NeurIPS EMC2 Workshop}, 2019.
\newblock URL \url{https://arxiv.org/abs/1910.01108}.

\bibitem[Schulman et~al.(2017)Schulman, Wolski, Dhariwal, Radford, and Klimov]{schulmanProximalPolicyOptimization2017}
John Schulman, Filip Wolski, Prafulla Dhariwal, Alec Radford, and Oleg Klimov.
\newblock Proximal {{Policy Optimization Algorithms}}.
\newblock \emph{arXiv}, 2017.
\newblock \doi{10.48550/arXiv.1707.06347}.
\newblock URL \url{https://arxiv.org/abs/1707.06347}.

\bibitem[Sheng et~al.(2019)Sheng, Chang, Natarajan, and Peng]{sheng-etal-2019-woman}
Emily Sheng, Kai-Wei Chang, Premkumar Natarajan, and Nanyun Peng.
\newblock The woman worked as a babysitter: On biases in language generation.
\newblock In \emph{EMNLP-IJCNLP}, 2019.
\newblock \doi{10.18653/v1/D19-1339}.
\newblock URL \url{https://aclanthology.org/D19-1339}.

\bibitem[Sidahmed et~al.(2024)Sidahmed, Phatale, Hutcheson, Lin, Chen, Yu, Jin, Chaudhary, Komarytsia, Ahlheim, Zhu, Li, Ganesh, Byrne, Hoffmann, Mansoor, Li, Rastogi, and Dixon]{sidahmed2024parameterefficientreinforcementlearning}
Hakim Sidahmed, Samrat Phatale, Alex Hutcheson, Zhuonan Lin, Zhang Chen, Zac Yu, Jarvis Jin, Simral Chaudhary, Roman Komarytsia, Christiane Ahlheim, Yonghao Zhu, Bowen Li, Saravanan Ganesh, Bill Byrne, Jessica Hoffmann, Hassan Mansoor, Wei Li, Abhinav Rastogi, and Lucas Dixon.
\newblock Parameter efficient reinforcement learning from human feedback, 2024.
\newblock URL \url{https://arxiv.org/abs/2403.10704}.

\bibitem[Sitdikov et~al.(2022)Sitdikov, Balagansky, Gavrilov, and Markov]{sitdikovClassifiersAreBetter2022}
Askhat Sitdikov, Nikita Balagansky, Daniil Gavrilov, and Alexander Markov.
\newblock Classifiers are {{Better Experts}} for {{Controllable Text Generation}}.
\newblock \emph{arXiv}, 2022.
\newblock \doi{10.48550/arXiv.2205.07276}.
\newblock URL \url{https://arxiv.org/abs/2205.07276}.

\bibitem[Socher et~al.(2013)Socher, Perelygin, Wu, Chuang, Manning, Ng, and Potts]{socher2013recursive}
Richard Socher, Alex Perelygin, Jean Wu, Jason Chuang, Christopher~D. Manning, Andrew Ng, and Christopher Potts.
\newblock Recursive deep models for semantic compositionality over a sentiment treebank.
\newblock In \emph{EMNLP}, 2013.
\newblock URL \url{https://huggingface.co/datasets/stanfordnlp/sst2}.

\bibitem[Stiennon et~al.(2020)Stiennon, Ouyang, Wu, Ziegler, Lowe, Voss, Radford, Amodei, and Christiano]{stiennonLearningSummarizeHuman2020}
Nisan Stiennon, Long Ouyang, Jeffrey Wu, Daniel Ziegler, Ryan Lowe, Chelsea Voss, Alec Radford, Dario Amodei, and Paul~F Christiano.
\newblock Learning to summarize with human feedback.
\newblock In \emph{NeurIPS}, 2020.
\newblock URL \url{https://proceedings.neurips.cc/paper/2020/hash/1f89885d556929e98d3ef9b86448f951-Abstract.html}.

\bibitem[Sudhakar et~al.(2019)Sudhakar, Upadhyay, and Maheswaran]{sudhakarTransformingDeleteRetrieve2019}
Akhilesh Sudhakar, Bhargav Upadhyay, and Arjun Maheswaran.
\newblock ``{{Transforming}}'' {{Delete}}, {{Retrieve}}, {{Generate Approach}} for {{Controlled Text Style Transfer}}.
\newblock In \emph{{{EMNLP-IJCNLP}}}, 2019.
\newblock \doi{10.18653/v1/D19-1322}.
\newblock URL \url{https://aclanthology.org/D19-1322/}.

\bibitem[Sun et~al.(2024)Sun, Haider, Zhang, Yang, Qiu, Yin, Wang, Bartlett, and Zanette]{sun2024fast}
Hanshi Sun, Momin Haider, Ruiqi Zhang, Huitao Yang, Jiahao Qiu, Ming Yin, Mengdi Wang, Peter Bartlett, and Andrea Zanette.
\newblock Fast best-of-n decoding via speculative rejection.
\newblock In \emph{NeirIPS}, 2024.
\newblock URL \url{https://openreview.net/forum?id=348hfcprUs}.

\bibitem[Sutton \& Barto(2018)Sutton and Barto]{sutton2018reinforcement}
Richard~S. Sutton and Andrew~G. Barto.
\newblock \emph{Reinforcement Learning: An Introduction}.
\newblock The MIT Press, 2 edition, 2018.

\bibitem[Tang et~al.(2023)Tang, Munos, Rowland, and Valko]{pmlr-v202-tang23h}
Yunhao Tang, Remi Munos, Mark Rowland, and Michal Valko.
\newblock {VA}-learning as a more efficient alternative to q-learning.
\newblock In \emph{PMLR}, 2023.
\newblock URL \url{https://proceedings.mlr.press/v202/tang23h.html}.

\bibitem[Touvron et~al.(2023)Touvron, Martin, Stone, and et~al.]{touvron2023llama2openfoundation}
Hugo Touvron, Louis Martin, Kevin Stone, and Peter~Albert et~al.
\newblock Llama 2: Open foundation and fine-tuned chat models.
\newblock \emph{arXiv}, 2023.
\newblock URL \url{https://arxiv.org/abs/2307.09288}.

\bibitem[Udell \& Townsend(2019)Udell and Townsend]{udell_random_rank}
Madeleine Udell and Alex Townsend.
\newblock Why are big data matrices approximately low rank?
\newblock \emph{SIAM Journal on Mathematics of Data Science}, 1\penalty0 (1):\penalty0 144--160, 2019.
\newblock \doi{10.1137/18M1183480}.
\newblock URL \url{https://doi.org/10.1137/18M1183480}.

\bibitem[Vaswani et~al.(2017)Vaswani, Shazeer, Parmar, Uszkoreit, Jones, Gomez, Kaiser, and Polosukhin]{vaswani_transformer}
Ashish Vaswani, Noam Shazeer, Niki Parmar, Jakob Uszkoreit, Llion Jones, Aidan~N. Gomez, \L{}ukasz Kaiser, and Illia Polosukhin.
\newblock Attention is all you need.
\newblock In \emph{NeurIPS}, 2017.
\newblock URL \url{https://dl.acm.org/doi/10.5555/3295222.3295349}.

\bibitem[Wallace et~al.(2019)Wallace, Feng, Kandpal, Gardner, and Singh]{wallace-etal-2019-universal}
Eric Wallace, Shi Feng, Nikhil Kandpal, Matt Gardner, and Sameer Singh.
\newblock Universal adversarial triggers for attacking and analyzing {NLP}.
\newblock In \emph{EMNLP-IJCNLP}, 2019.
\newblock \doi{10.18653/v1/D19-1221}.
\newblock URL \url{https://aclanthology.org/D19-1221}.

\bibitem[Wang et~al.(2024{\natexlab{a}})Wang, Xiong, Xie, Zhao, and Zhang]{wang2024interpretablepreferencesmultiobjectivereward}
Haoxiang Wang, Wei Xiong, Tengyang Xie, Han Zhao, and Tong Zhang.
\newblock Interpretable preferences via multi-objective reward modeling and mixture-of-experts, 2024{\natexlab{a}}.
\newblock URL \url{https://arxiv.org/abs/2406.12845}.

\bibitem[Wang et~al.(2023)Wang, Wei, Schuurmans, Le, Chi, Narang, Chowdhery, and Zhou]{wang2023selfconsistency}
Xuezhi Wang, Jason Wei, Dale Schuurmans, Quoc~V. Le, Ed~H. Chi, Sharan Narang, Aakanksha Chowdhery, and Denny Zhou.
\newblock Self‐consistency improves chain of thought reasoning in language models.
\newblock In \emph{ICLR}, 2023.
\newblock URL \url{https://openreview.net/forum?id=1PL1NIMMrw}.

\bibitem[Wang et~al.(2024{\natexlab{b}})Wang, Dong, Zeng, Adams, Sreedhar, Egert, Delalleau, Scowcroft, Kant, Swope, and Kuchaiev]{wang-etal-2024-helpsteer}
Zhilin Wang, Yi~Dong, Jiaqi Zeng, Virginia Adams, Makesh~Narsimhan Sreedhar, Daniel Egert, Olivier Delalleau, Jane Scowcroft, Neel Kant, Aidan Swope, and Oleksii Kuchaiev.
\newblock {H}elp{S}teer: Multi-attribute helpfulness dataset for {S}teer{LM}.
\newblock In Kevin Duh, Helena Gomez, and Steven Bethard (eds.), \emph{Proceedings of the 2024 Conference of the North American Chapter of the Association for Computational Linguistics: Human Language Technologies (Volume 1: Long Papers)}, pp.\  3371--3384, Mexico City, Mexico, June 2024{\natexlab{b}}. Association for Computational Linguistics.
\newblock \doi{10.18653/v1/2024.naacl-long.185}.
\newblock URL \url{https://aclanthology.org/2024.naacl-long.185/}.

\bibitem[Wang et~al.(2016)Wang, Schaul, Hessel, Hasselt, Lanctot, and Freitas]{pmlr-v48-wangf16}
Ziyu Wang, Tom Schaul, Matteo Hessel, Hado Hasselt, Marc Lanctot, and Nando Freitas.
\newblock Dueling network architectures for deep reinforcement learning.
\newblock In \emph{PMLR}, 2016.
\newblock URL \url{https://proceedings.mlr.press/v48/wangf16.html}.

\bibitem[Xu et~al.(2025)Xu, Sehwag, Koppel, Zhu, An, Huang, and Ganesh]{xu2025genarm}
Yuancheng Xu, Udari~Madhushani Sehwag, Alec Koppel, Sicheng Zhu, Bang An, Furong Huang, and Sumitra Ganesh.
\newblock Gen{ARM}: Reward guided generation with autoregressive reward model for test-time alignment.
\newblock In \emph{ICLR}, 2025.
\newblock URL \url{https://openreview.net/forum?id=J0qTpmbSbh}.

\bibitem[Yang \& Klein(2021)Yang and Klein]{yang-klein-2021-fudge}
Kevin Yang and Dan Klein.
\newblock {FUDGE}: Controlled text generation with future discriminators.
\newblock In \emph{NAACL}, Online, 2021.
\newblock \doi{10.18653/v1/2021.naacl-main.276}.
\newblock URL \url{https://aclanthology.org/2021.naacl-main.276}.

\bibitem[Yang et~al.(2018)Yang, Dai, Salakhutdinov, and Cohen]{yang2018breaking}
Zhilin Yang, Zihang Dai, Ruslan Salakhutdinov, and William~W. Cohen.
\newblock Breaking the softmax bottleneck: A high-rank {RNN} language model.
\newblock In \emph{ICLR}, 2018.
\newblock URL \url{https://openreview.net/forum?id=HkwZSG-CZ}.

\bibitem[Zhang et~al.(2024)Zhang, Zeng, Wang, and Lu]{zhang2024tinyllama}
Peiyuan Zhang, Guangtao Zeng, Tianduo Wang, and Wei Lu.
\newblock Tinyllama: An open-source small language model.
\newblock \emph{arXiv}, 2024.

\bibitem[Zhang et~al.(2015)Zhang, Zhao, and LeCun]{zhang2015character}
Xiang Zhang, Junbo Zhao, and Yann LeCun.
\newblock Character-level convolutional networks for text classification.
\newblock In \emph{NeurIPS}, 2015.
\newblock URL \url{https://huggingface.co/datasets/fancyzhx/amazon_polarity}.

\end{thebibliography}
\bibliographystyle{tmlr}

\newpage
\appendix

\begin{center}
   \Huge \bfseries Appendix
\end{center}

\textcolor{red}{Warning: (the last page of) this appendix contains model-generated text conditioned on high toxicity contexts.}

\section{Limitations}
The models discussed in this work can only reduce the probability of generating the toxic responses, not prevent it. Moreover, evaluation of toxicity is far from perfect, and even a very low toxicity score from automatic evaluation such as Perspective API does not necessary mean that the sample is `safe'. Furthermore, we should not exclusively rely on toxicity when evaluating the safety of samples from language models due to the complexity and variability of language. It is also not clear that by reducing toxicity, we are not introducing other harms.
Furthermore, both RAD-V and our models represent low-rank $\hat{R}$ and further qualitative research is needed to investigate whether certain toxicity patterns require high rank to represent them.

\section{Reward Matrix}
\label{app:loss_minimizer}

To train a reward model, we use weighted mean squared loss, for which the weighted mean recovers the minimum:

\begin{equation} r^* = \argmin_{r} \sum_{\lambda, y} \lambda (r - y)^2 = \frac{\sum_{\lambda, y}\lambda y}{\sum_{\lambda,y} \lambda}
\end{equation}
\begin{proof}
    $\frac{\partial}{\partial r} \sum_{\lambda, y} \lambda (r - y)^2 = \sum_{\lambda, y} \frac{\partial}{\partial r}[\lambda (r - y)^2 ] =  \sum_{\lambda, y} 2\lambda(r-y) = 2 (r \sum_{\lambda,y} [\lambda] - \sum_{\lambda, y}[\lambda y]) = 0$.
    Hence, $r^* = \frac{\sum_{\lambda, y}\lambda y}{\sum_{\lambda,y} \lambda}$
\end{proof}

\section{Factorization of $P_{\Omega}(R)$}

Any matrix $R \in \mathbb{R}^{N \times |V|}$ can be factored as $R=UV^T$ with $U, V$ of dimensions $N \times q; |V| \times q$. If $R$ is \emph{incomplete}, then there are in general multiple possible factorizations of $P_{\Omega}(R)$ compatible with the observed values.

\subsection{Rank-$1$ case}
\label{app:rank_1}
To get a better intuition why the incompleteness of $P_{\Omega}(R)$ allows us to find a compatible factorization with lower minimal rank, consider a simple example. If we only know $1$ element per row of $R$, then the minimal rank of $P_{\Omega}(R)$ is equal to $1$. To prove this, consider completing $P_{\Omega}(R)$ such that each row is filled with the same element (the only one known for this row):

\begin{equation*}
\begin{pmatrix}
1 & ? & ? \\
? & 4 & ? \\
? & ? & 3 \\
\end{pmatrix} 
\rightarrow
\begin{pmatrix}
1 & 1 & 1 \\
4 & 4 & 4 \\
3 & 3 & 3 \\
\end{pmatrix} 
\end{equation*}

\subsection{A case of a single missing value.}
\label{app:lemma_1}
Here we prove that introducing a missing value in a random $k$-by-$k$ matrix will almost always (with probability 1) allow a completion of rank at most $k-1$.

\begin{proof}
    Let $R$ be a $k$-by-$k$ matrix, and
    without loss of generality, by permuting rows and columns we may assume the unknown element is at position \((1,1)\).
    Consider completing it with a value $x$, and denote the completed matrix as $A$.
    Let $A_{ij}$ be the $ij$ minor of $A$, \ie the determinant of the submatrix left after removing row $i$ and column $j$.

    We can expand $\det(A) = x A_{11} + a_{21} A_{21} + \ldots + a_{k1} A_{k1}$. In order for $A$ to not be full rank, there needs to be a solution in $x$ to $\det(A)=0$. If $A_{11}$ is not $0$, then $x = -(a_{21} A_{21} + \ldots + a_{k1} A_{k1}) / A_{11}$ will lead to a completion of rank less than $k$
    regardless of any of the specific values.
    However, if $A_{11}$ is $0$ but $a_{21} A_{21} + \ldots + a_{k1} A_{k1}$ is not $0$, then there is no solution for $x$ that would result in a determinant of $0$ 
    and thus the rank of $A$ must be $k$.
    The set of matrices $R$ satisfying $R_{11} = 0$ has zero Lebesgue measure inside $\bbR^{k \times k}$, and thus
    for any distribution over matrices with full support, sampling such $R$ is a zero-probability random event.%
\end{proof}

\subsection{Estimating the minimal rank of the data}
\label{sec:is_data_low_rank}

Empirically calculating minimal rank is challenging due to the very large number of prefixes (row of the matrix), particularly, a large portion of the prefixes have unique continuations. We show how we simplify the minimal rank estimation by considering only the prefixes with two or more continuations, and demonstrate that partially observed $\hat{R}$ can be fit with a low rank matrix factorization.

 Given a training dataset of responses and text utterances, there will be many unique prefixes, for which the \Cref{example:1} is applicable. We can therefore reduce the complexity of rank estimation by skipping these prefixes, as shown in the next result.
 \begin{lemma}
 \label{lemma:rank_block}
Let \((R, \Omega)\) be a partially-observed matrix and  \(\Omega_2 \subseteq \Omega\) denote the subset of observed indices that have at least one other index in the same row. 
Then,
\begin{equation}
\minepsrank{R} \le 1 + 
\operatorname{min~rank}_{{\Omega_2}, \varepsilon}(R)
\end{equation}
\end{lemma}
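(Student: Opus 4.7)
The plan is to extend an optimal $\varepsilon$-rank completion for $\Omega_2$ by a single rank-$1$ correction that exactly fits the observations in rows with only one observed entry. Rows with a unique observed index are, by definition, absent from $\Omega_2$ yet still lie in $\Omega$, so any completion certifying the right-hand side with respect to $\Omega_2$ must be patched on those rows; because the patch need only adjust a single scalar per such row, it can be absorbed into one rank-$1$ summand, mirroring the construction of \Cref{app:rank_1}.

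Concretely, I would let $\hat{R}_2 \in \bbR^{n \times m}$ be a completion achieving $r \defeq \operatorname{min~rank}_{\Omega_2, \varepsilon}(R)$, so that $\tfrac{1}{nm}\|P_{\Omega_2}(R) - P_{\Omega_2}(\hat{R}_2)\|_F^2 \le \varepsilon$ and $\rank(\hat{R}_2) = r$. For each row $i$ with a unique observed index $(i, j_i) \in \Omega \setminus \Omega_2$, define the scalar $c_i \defeq R[i, j_i] - \hat{R}_2[i, j_i]$; for every other row, set $c_i = 0$. Assemble these into the rank-$1$ matrix $C = c\, \mathbf{1}^\top$, and propose $\hat{R} \defeq \hat{R}_2 + C$ as the candidate completion of $R$ with respect to $\Omega$.

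The verification then reduces to two short checks. By subadditivity of rank, $\rank(\hat{R}) \le \rank(\hat{R}_2) + \rank(C) \le r + 1$. For the error, note that $C$ adds exactly $c_i$ to every entry of row $i$: on $\Omega_2$ the corresponding rows have $c_i = 0$ by construction, so the $\Omega_2$ residual is unchanged, while on $\Omega \setminus \Omega_2$ the single observed entry in row $i$ is hit exactly, since $(\hat{R}_2 + C)[i, j_i] = \hat{R}_2[i, j_i] + c_i = R[i, j_i]$. Summing squared entries gives $\|P_\Omega(R) - P_\Omega(\hat{R})\|_F^2 = \|P_{\Omega_2}(R) - P_{\Omega_2}(\hat{R}_2)\|_F^2$, and dividing by $nm$ delivers the required $\varepsilon$ bound.

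I do not see a genuine obstacle here: the entire argument hinges on the row-wise definition of $\Omega_2$, which ensures that $\Omega \setminus \Omega_2$ is supported on rows disjoint from those contributing to the $\Omega_2$ residual, so the rank-$1$ correction is non-interfering. Rows containing no observations at all are handled trivially by $c_i = 0$, and no assumption on the underlying distribution of $R$ or on the sizes of the two index sets is needed.
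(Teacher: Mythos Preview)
Your proof is correct and follows essentially the same approach as the paper. The paper phrases the construction as a block decomposition---permuting the single-observation rows together, completing that block at rank~$1$ (as in \Cref{app:rank_1}), and stacking it with an $\Omega_2$-optimal completion of the remaining rows---whereas you add a rank-$1$ correction $c\,\mathbf{1}^\top$ to a global $\Omega_2$-completion; both arguments exploit the row-disjointness of $\Omega\setminus\Omega_2$ from $\Omega_2$ and rank subadditivity, so the difference is only cosmetic.
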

\proof{Permute rows such that all rows with a single index are grouped together. We have a block-incomplete matrix where the top block admits a completion of rank 1, and the bottom block admits a completion of rank 
$\operatorname{min~rank}_{{\Omega_2}, \varepsilon}(R)$.
The rank of the stacked completions is no more than the sum of the ranks of the completions.}

This result reduces the problem of estimating
$\minepsrank{R}$ to the possibly smaller problem of estimating
$\operatorname{min~rank}_{{\Omega_2}, \varepsilon}(R)$
(since any fully-unobserved rows and columns can be skipped.)

We now numerically verify 
that there exist low-rank factorizations
compatible with $P_{\Omega_2}(R)$ within $\varepsilon$.
Finding such a factorization of rank $r-1$ implies, by the previous lemma, that $R$ is of minimal rank $r$ \wrt $\Omega$ and thus
that the training dataset $D_f$ can be fit by a reward model with rank bound by $r$, regardless of the specifics of said model.
In general, finding minimal rank factorization of incomplete matrices is known to be NP-hard, and usually convex relaxation such as minimization of the nuclear norm is considered (see \citet{LowRankMatrix2009}). To 
factorize a partially-observed matrix, we use the \emph{soft impute} alternating least squares algorithm \citep{JMLR:v11:mazumder10a, hastie2014matrixcompletionlowranksvd} \footnote{\url{https://cran.r-project.org/web/packages/softImpute/index.html}}. 
Given a matrix $X \in \bbR^{n \times m}$ with observed indices \(\Omega\), this algorithm solves
\begin{equation}
\begin{aligned}
    \operatorname{minimize}~& \left\| P_\Omega(X - AB^\top) \right\|^2_F + \lambda(\|A\|^2_F + \|B\|^2_F) \\
    \text{with respect to}~& A \in \bbR^{n \times k}, B \in \bbR^{m \times k}\\
\end{aligned} 
\end{equation}
by alternating between efficient soft SVD updates for 
$A$ given $B$ and $B$ given $A$.
We optimize for 1000 iterations with a 
trace norm penalty of $\lambda=10^{-4}$.
This penalty induces a small bias but improves convergence. At convergence, $A$ and $B$ constitute a \emph{certificate} of the minimal numerical rank of $X$ \wrt $\Omega$.

Results reported in \cref{tab:mselowrank}
provide strong evidence that 
both datasets can be approximated well by low rank matrices, close to the $10^{-6}$ resolution of 32-bit floating point numbers. 
The reported MSE values for rank $d-1$ can be interpreted as reachable lower bounds of the MSE training loss of a RAD-Q transformer with hidden dimension $d$ on the respective training data.

\begin{table}
\begin{center}
\begin{tabular}{l l l l l}
\toprule
rank & detox. & sentiment & helpfulness & safety \\
\midrule
0 & 4.6e-2 & 4.9e-1   &  8.4  &   0.42  \\
255 & 3.4e-5 & 3.4e-4 &  1.04e-08  &  5.08e-08   \\
511 & 7.7e-6 & 9.7e-5 &  1.45e-09  &  5.77e-10   \\
767 & 4.7e-7 & 6.6e-7 &  1.28e-09  &  4.81e-10    \\
\bottomrule
\end{tabular}
\end{center}
\caption{Mean squared errors of low-rank matrix completion of the Jigsaw (detox) and Amazon review polarity (sentiment) datasets following the methodology described in \cref{sec:is_data_low_rank}. Additionally, we report MSE for HelpSteer (helpfulness) dataset \citep{wang-etal-2024-helpsteer} and BeaverTails (safety) dataset \citep{NEURIPS2023_beavertails}, which are commonly used for reward model training \citep{wang2024interpretablepreferencesmultiobjectivereward}. The zero rank row corresponds to predicting the zero matrix. 
All datasets can be approximated well by low rank models. For the ranks, we use multiples of $256-1$, because one rank is reserved to handle the single-occurrence contexts.}
\label{tab:mselowrank}
\end{table}

\section{Rank expressivity of $\hat{R}_{\text{RAD-V} }$ and $\hat{R}_{\text{RAD-Q}}$}
\label{exp:low_rank_fails}

\subsection{Rank expressivity of RAD-$V$}
\label{app:rad_rank}
In this experiment, we empirically verify that RAD-V is capable to approximate $P_{\Omega}(R)$ matrix with $\rank(P_{\Omega}(R))>d$, where $d$ is the dimensionality of the model. We finetune RAD-V initialized from the GPT-2-Small (with $d=764$) on a synthetic data constructed as in \cref{example:3}. We generate $P_\Omega(R), n=1024>d$, an incomplete matrix of size $1024$ with unknown elements above the diagonal, ones on the diagonal, and zeros below the diagonal. 
With this construction,
$\minrank{R}=n$ and thus greater than the model dimension.

We verify that we can train RAD-V to fit this train matrix obtaining the training loss (MSE) less than $5\cdot10^{-6}$.

\subsection{Rank expressivity of RAD-$Q$}

RAD-Q approximates $P_{\Omega}(R)$ as a product of two rank $d$ matrices, hence RAD-Q cannot reconstruct the data perfectly, which has higher rank; RAD-Q obtains MSE $>0.0001$.

 We thus conclude that RAD-V (in contrast to RAD-Q) is indeed capable of representing $P_{\Omega}(R)$ matrices with a rank higher than $d$.

 \subsection{Real data experiments} For the experiment with the real datasets for the detoxification and sentiment control tasks, in \Cref{fig:rank}, we numerically measure the rank of $R_{RAD-V} $ and $R_{RAD-Q}$, and observe that both RAD-Q and RAD-V learn low-rank reward matrices. We thus conclude that both these models have needed capacity to represent the incomplete $P_{\Omega}(R)$ matrices obtained from the datasets.

\subsection{Numerical rank}
\label{app:matrix_rank}
To compute rank of $n \times m$ matrix, we use the default cutoff in Numpy \footnote{\url{https://numpy.org/doc/stable/reference/generated/numpy.linalg.matrix_rank.html}} and PyTorch \footnote{\url{https://pytorch.org/docs/stable/generated/torch.linalg.matrix_rank.html}} at the time of writing, which is to say we count only singular values above $\max(m,n) \varepsilon \sigma_1$, where $\varepsilon$ is the machine epsilon for the corresponding data type, i.e., the difference between $1.0$ and the next smallest representable number larger than $1.0$, and $\sigma_1$ is the largest singular value.

There are potential issues that may arise when computing the numerical rank. One issue is that the singular values, especially for the matrices coming from 32bit float precision neural network, will not be exactly zero, so this is why libraries like Numpy or PyTorch use a precision-based cutoff for singular values that should be considered indistinguishable from zero; we use the default such parameters. The other issue is that the number of rows in the reward matrices is very high and we follow the work of \citet{finlayson2024closing} and estimate rank by sampling rows from the matrix. Different submatrices can have different ranks, but we sample i.i.d. to prevent this.

\begin{figure}[tbp]
  \centering
  \subfloat[Detoxification task.]{
    \includegraphics[width=0.47\textwidth]{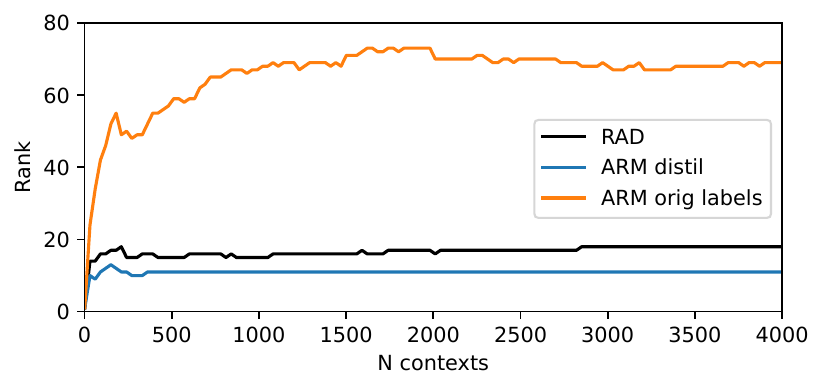}
  }
 \hfill
 \subfloat[Sentiment task.]{
    \includegraphics[width=0.47\textwidth]{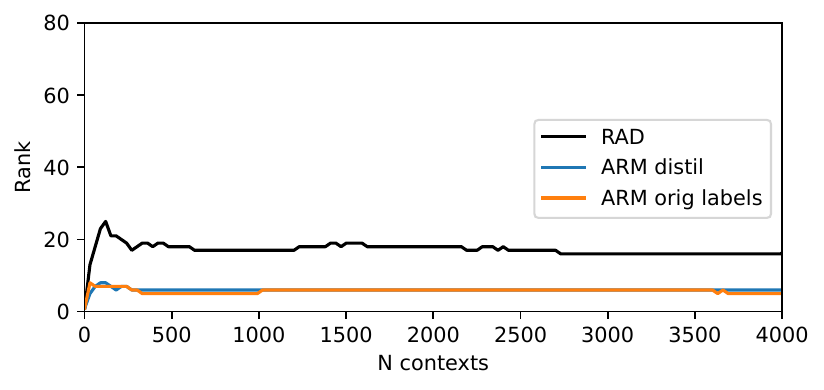}
}
 \caption{We numerically estimate the ranks of both $\hat{R}_{\text{RAD-V} }$ and $\hat{R}_{\text{RAD-Q}}$ increasing the number of training prefixes (rows of $\hat{R}$). In all cases, the ranks tend to be less than the model dimension $d=764$. This means that rank-capacity of RAD-Q is sufficient to capture the training datasets for the detoxification and sentiment tasks.}
  \label{fig:rank}
\end{figure}

\section{Training Details}
\label{app:training_details}
To train reward models, we reuse the hyperparameters from \citet{deng_2023_rad}, where possible. We finetune the reward models with Adam optimizer \citep{kingma:adam} with $\beta_1=0.9, \beta_2=0.95, \epsilon=1\mathrm{e}{-12}$. We use weight decay $0.01$, batch size $100$, and the learning rate changes linearly from the initial value ($10^{-5}$ by default) to zero.

To train RAD-Q, we initialize the parameters with the pretrained GPT-2-Small/TinyLLaMa \footnote{\url{https://huggingface.co/TinyLlama/TinyLlama-1.1B-intermediate-step-1431k-3T}} weights, and freeze the shared input-output embedding parameters. Alternative strategy would be to use parameter efficient finetuning \citep{hu2022lora, sidahmed2024parameterefficientreinforcementlearning}.

\subsection{Detoxification}
For the detoxification task, we finetune RAD-Q with the learning rate $10^{-5}$ for $5$ epochs.

For the LLaMa-2, we additionally finetune RAD-V with the TinyLLaMa backbone for the fair comparison with RAD-Q.

\subsection{Sentiment Control}
To finetune RAD-Q on responses only for sentiment control task, we first finetune the model with the learning rate $10^{-5}$ on the Amazon Polarity dataset, and then finetune it for $5$ epochs on the SST-2 dataset with the learning rate $2\mathrm{e}{-6}$. For distillation experiment, we finetune RAD-Q for $5$ epochs with the learning rate $10^{-5}$ on Amazon Polarity dataset.

\section{MAUVE}
\label{app:mauve}
To complement perplexity as a measure of fluency, we use MAUVE \citep{pillutla2021mauve} as one of the fluency metrics. For  reference texts, we take the generations of unguided model (GPT-2, or LLaMa-2-(7b/13b). Thus, this metric should capture how close the distribution of the continuations of a guided model is to the distribution of the original language model.
To calculate MAUVE, we follow recommendations of \citet{He_Zhang_Wang_Kumar_Cho_Glass_Tsvetkov_2023} and use ELECTRA-large model to obtain the text representations. We use the hyperparameters of \citet{pillutla2021mauve}: $c=5$ for the scaling constant; $k-$means for the quantization algorithm with $500$ iterations, and $n/10$ clusters where $n$ is the number of generations. To compute MAUVE, we use $1000$ prompts from the evaluation dataset.

\section{Results}

\subsection{Detoxification}
\subsubsection{Results with Perspective API classifier}
\label{app:detox_perspective}

In this section, we report full results with the Perspective API as a toxicity classifier.
\paragraph{GPT-2.}
\label{sec:app_detox}
Results for the detoxification task with the GPT-2-Large base model and GPT-2-small reward model, are presented in \Cref{tab:full_toxicity}. 

We present the results for RAD-Q and RAD-V with \emph{top-k} decoding with $k=40$ in \Cref{fig:toxicity_k_40}. We observe similar relative performance of RAD-Q compared to RAD-V as in the experiment with $k=20$, presented in the \Cref{fig:toxicity}.

\paragraph{LLaMa-2.}
Results for detoxification task with LLaMa-2-(7b/13b) base model and TinyLLaMa reward model are presented in \Cref{fig:llama_toxicity_perspective} and \Cref{tab:llama_toxicity}.

\paragraph{Baselines.} Additionally, in \Cref{fig:deng_et_al_toxicity_results}, we include results from \cite{deng_2023_rad} for other baseline models (for an older version of Perspective API).

To highlight the difference between the RAD-Q and DExperts, we show the trade-off plot for DExperts model in \Cref{fig:comparison_to_dexperts}, varying the $\alpha$ scalar parameter for DExperts. As we can observe, the RAD-Q has better constraint satisfaction / fluency trade-off than DExperts model. We attribute this to the difference in the training objectives of the expert models (reward modeling or language modeling), as argued in \citep{deng_2023_rad}.

\begin{figure}
    \centering
    \includegraphics[width=0.7\textwidth]{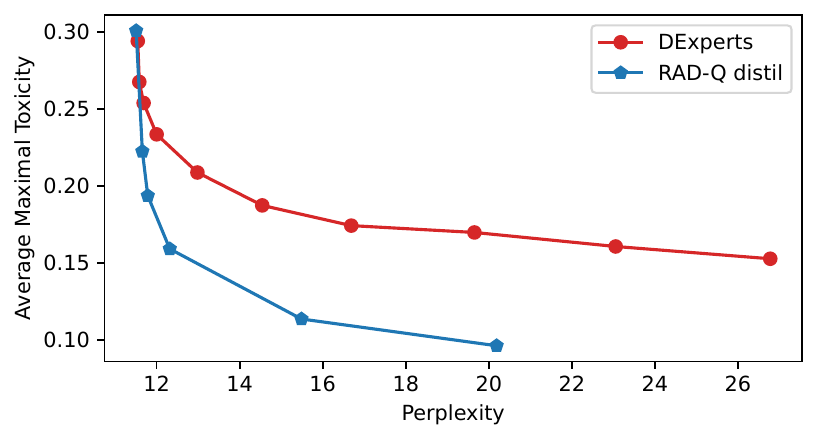}
    \caption{Comparison of toxicity/fluency trade-off between RAD-Q (distill) and DExperts. We rerun the sampling from these two models using \emph{top-k} decoding with $k=20$. Results are calculated over randomly selected $1000$ prompts. We observe, that RAD-Q show better constraint satisfaction/fluency than DExperts.}
    \label{fig:comparison_to_dexperts}
\end{figure}

\begin{figure}
    \centering
    \includegraphics[width=0.7\textwidth]{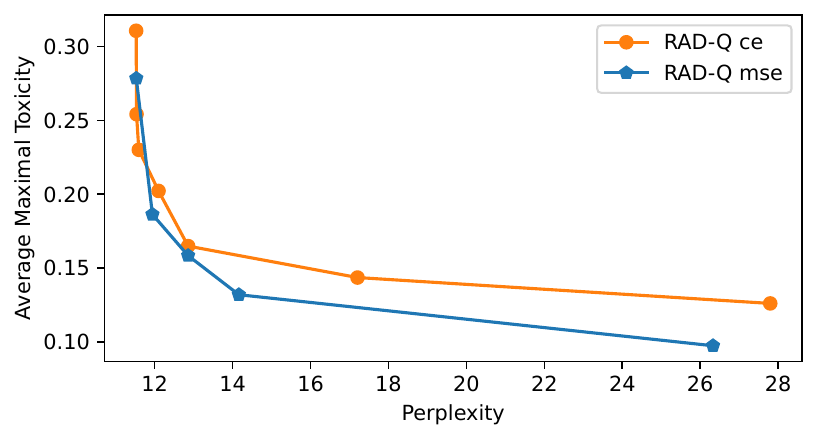}
    \caption{Comparison of RAD-Q model trained on original responses with squared loss vs with cross-entropy loss. We rerun the sampling from these two models using \emph{top-k} decoding with $k=20$. Results are calculated over randomly selected $1000$ prompts. We observe, that RAD-Q trained with the squared loss show slightly better constraint satisfaction/fluency than RAD-Q trained with cross-entropy loss.}
    \label{fig:ce_vs_mse_loss}
\end{figure}

\begin{figure}
    \centering
    \includegraphics[width=0.7\textwidth]{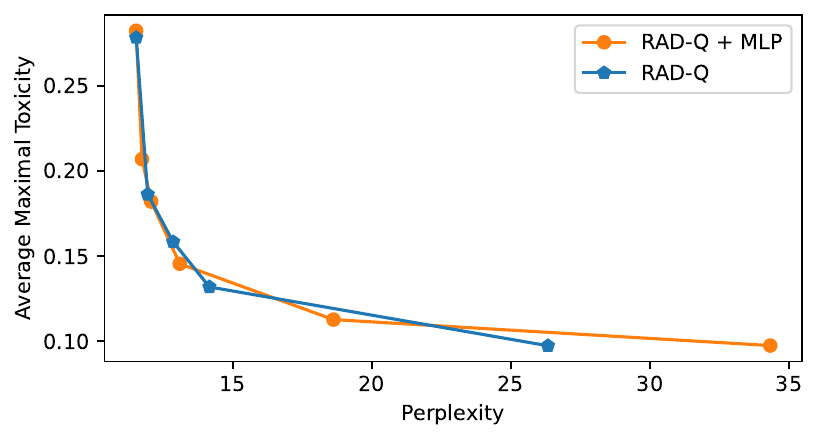}
    \caption{Comparison of RAD-Q trained on original responses with linear parametrization vs with non-linear MLP parametrization. We rerun the sampling from these two models using \emph{top-k} decoding with $k=20$. Results are calculated over randomly selected $1000$ prompts. We observe, that both parametrizations perform very closely.}
    \label{fig:mlp}
\end{figure}

\subsection{Additional Ablation Results}

\subsubsection{Loss choice}
We perform an ablation study for the choice of the loss function used to train RAD-Q \Cref{eq:cumulative_loss}. There, we follow the approach of \citep{deng_2023_rad}, where they introduce the squared loss (see section 2.1 Unidirectional Reward Model). An alternative strategy would be to use the binary cross-entropy loss, using the fact that for our datasets responses $y$ are from $[0;1]$ range:
\begin{equation}
\label{eq:cumulative_loss_ce}
    \cL_{ce}(\hat{r}(v|x'), y, \lambda) = \lambda (y \log \sigma(\hat{r}(v|x'))  + (1-y)\log(1-\sigma(\hat{r}(v|x'))) ),
\end{equation}
where we introduced $\sigma(x)=1/(1 + e^{-x})$ function to softly map the predictions of RAD-Q into $[0;1]$ range, which is also used during generation. \Cref{fig:ce_vs_mse_loss} demonstrates that the RAD-Q trained with the squared loss slightly outperform the RAD-Q trained with the binary cross-entropy loss.

\subsubsection{MLP vs Linear Parametrization}

In this ablation, we consider replacing the linear parametrization of RAD-Q  \Cref{eq:rad_factors} with a non-linear MLP parametrization:
\begin{equation}
    \Delta\hat{r}_{\text{RAD-Q+MLP}}(x) := W_1 \sigma (W_2 E^T W^T h(x)^T),
\end{equation}
where $W_1 \in \mathbb{R}^{d \times |V|}; W_2 \in \mathbb{R}^{|V| \times d}$. As we observe in \Cref{fig:mlp}, MLP parametrization performs on par with the linear parametrization. We thus recommend using a more simple linear parametrization.

\subsection{Sentiment control}

Here, in \Cref{fig:sentiment_top_k_40}, we include the additional results for the RAD-V and RAD-Q with \emph{top-k} decoding with $k=40$.

\subsubsection{Results with RoBERTa classifier}
\label{app:roberta}
In addition to toxicity scores with Perspective API, we provide the results with the open-weight RoBERTa toxicity classifier \citep{nicholas22aira} for the guided generation with GPT-2 (\Cref{fig:reproducible_gpt_perplexity} and \Cref{tab:full_toxicity_roberta}) and the LLaMa-2 (\Cref{fig:reproducible_llama_perplexity} and \Cref{tab:llama_toxicity_roberta}). We notice that results for the average maximal toxicity with RoBERTa are relatively similar to the result with Perspective API. We hope that with an open-weight classifier it will be easier for the community to directly compare to the published results without the need to recompute the API scores.

\begin{figure}[ht]
  \centering
    \centering
    \includegraphics[width=\textwidth]{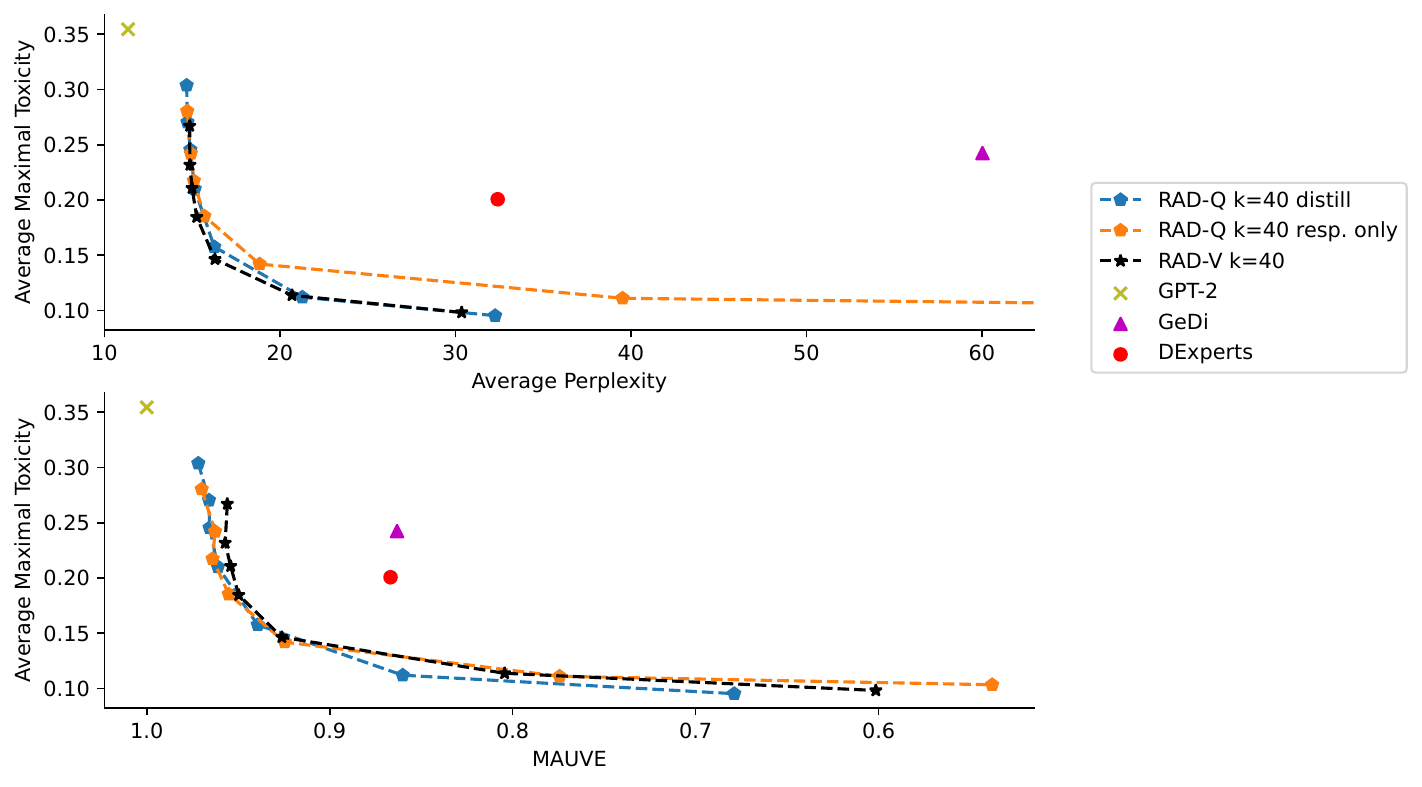}
    \caption{Additional results for detoxification task for RAD-Q and RAD-V with $k=40$.}
    \label{fig:toxicity_k_40}
  
\end{figure}

\begin{figure}
    \centering
    \includegraphics[width=\textwidth]{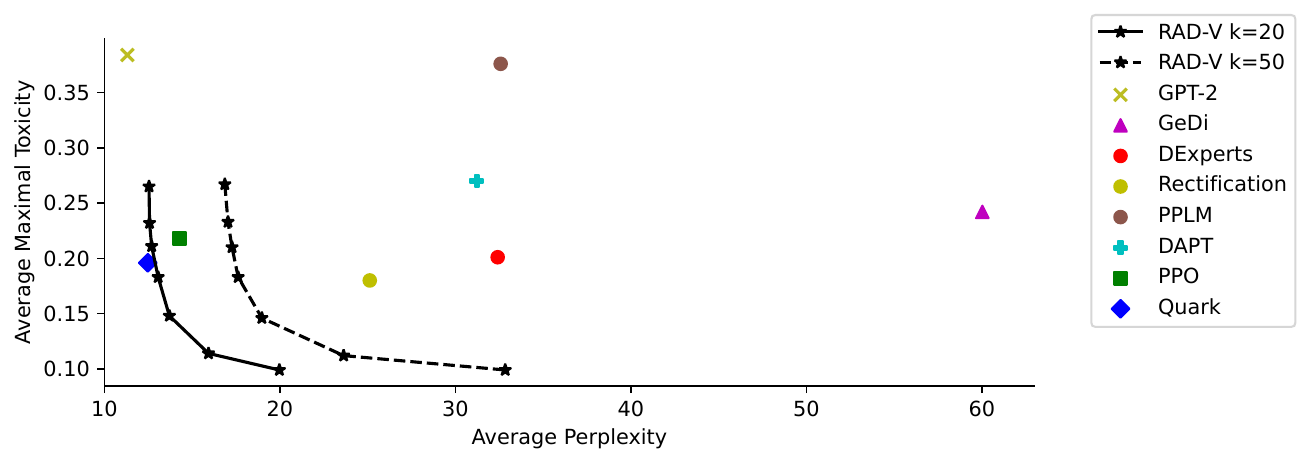}
    \caption{Detoxification results reported in \citet{deng_2023_rad} with Perspective API with GPT-2-Large model (API queries made between May and June 2023).}
    \label{fig:deng_et_al_toxicity_results}
\end{figure}

\begin{figure}[t]
  \centering
  \begin{minipage}{0.9\textwidth}
  \includegraphics[width=\textwidth]{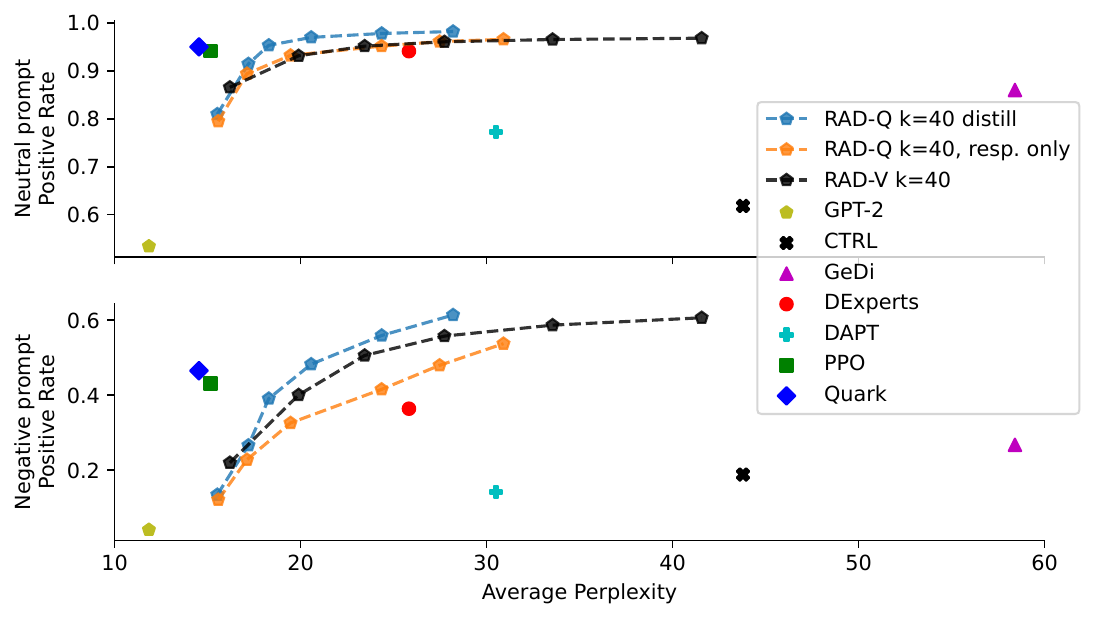}
  \label{fig:sentiment_k_40}
  \end{minipage}

  \centering
  \begin{minipage}{0.9\textwidth}
  \includegraphics[width=\textwidth]{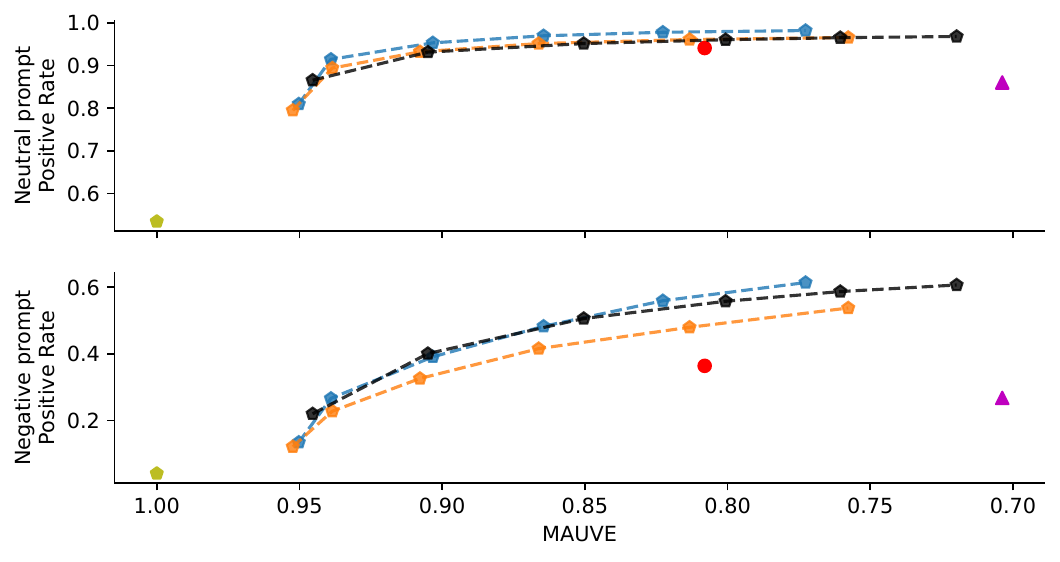}
  \caption{Additional results for sentiment control task with $k=40$. For this plot with average perplexity, we include the results from \cite{deng_2023_rad} for other baselines for reference.}
  \label{fig:sentiment_top_k_40}
  \end{minipage}
\end{figure}

\begin{figure}
    \centering
    \includegraphics[width=\textwidth]{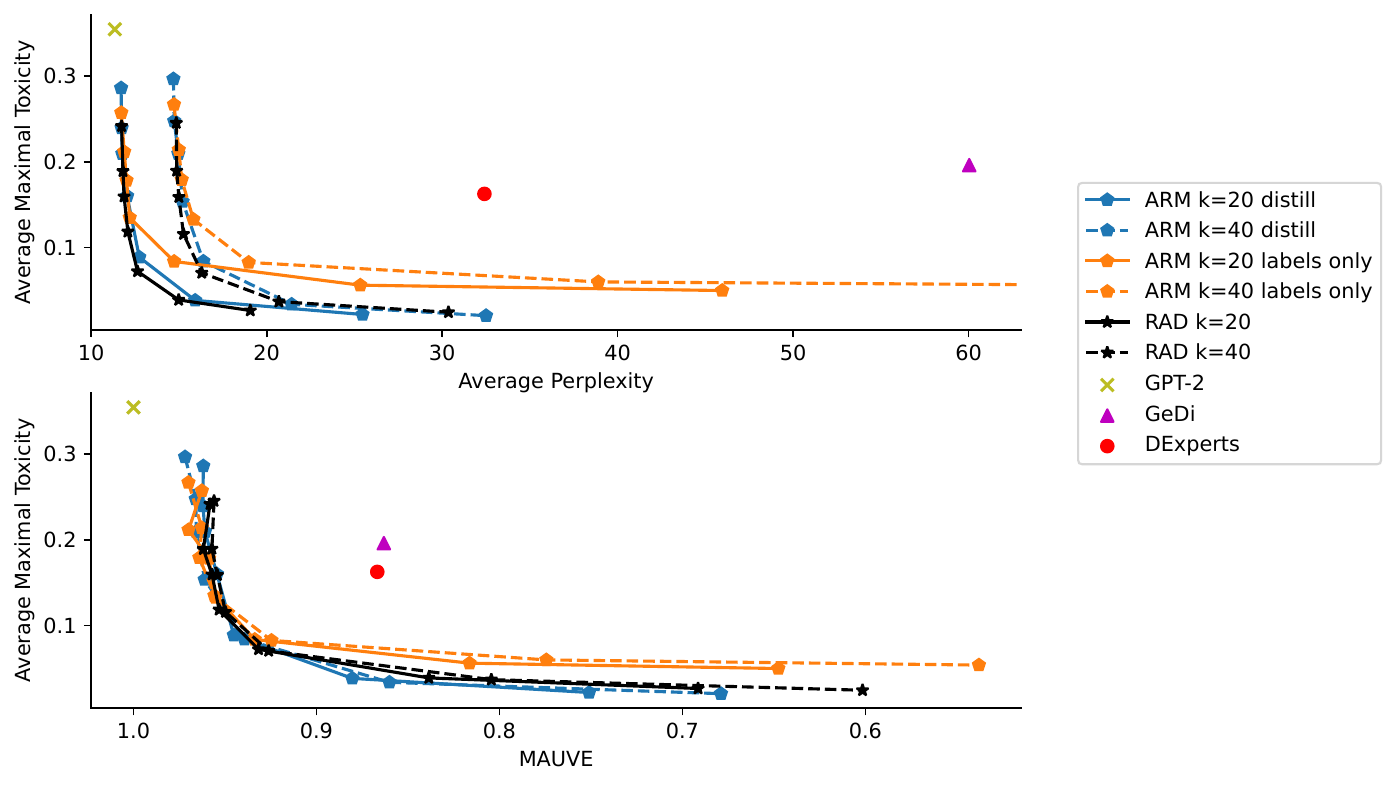}
    \caption{Detoxification results with finetuned RoBERTa toxicity classifier \citep{nicholas22aira} and the GPT-2-Large base model.}
    \label{fig:reproducible_gpt_perplexity}
\end{figure}

 \begin{figure}
    \centering
    \includegraphics[width=\textwidth]{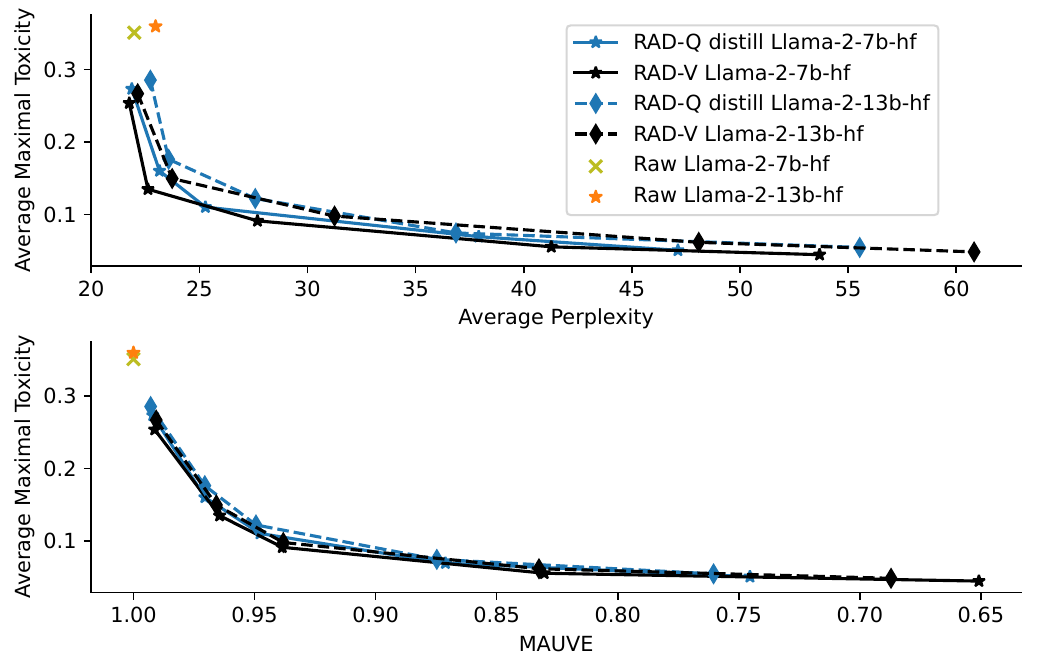}
    \caption{Detoxification results with finetuned RoBERTa toxicity classifier \citep{nicholas22aira} and the LLaMa-2 family of models.}
    \label{fig:reproducible_llama_perplexity}
\end{figure}

\begin{table}[ht]
\begin{tabular}{crccccccc}
\toprule

& & & \multicolumn{2}{c}{\% Toxicity ($\downarrow$)} & \multicolumn{2}{c}{Fluency} & \multicolumn{2}{c}{Diversity ($\uparrow$)} \\[0.5ex]
Model & & $\beta$ & Avg. Max & Toxic  & PPL ($\downarrow$) & MAUVE ($\uparrow$) & Dist 2 & Dist 3 \\

 & & $\beta$ & Toxicity & Rate & & & & \\

\midrule
\multirow{14}{*}{RAD-Q distill} &
\multirow{7}{*}{\emph{k=20}} &
    10  & 0.301 & 0.139 & 11.70 & 0.96 & 0.81 & 0.84 \\
& & 20  & 0.270 & 0.096 & 11.73 & 0.96 & 0.81 & 0.84 \\
& & 30  & 0.246 & 0.071 & 11.77 & 0.96 & 0.81 & 0.84 \\
& & 50  & 0.212 & 0.043 & 11.98 & 0.95 & 0.81 & 0.84 \\
& & 100 & 0.160 & 0.019 & 12.67 & 0.95 & 0.80 & 0.83 \\
& & 200 & 0.117 & 0.005 & 15.78 & 0.88 & 0.78 & 0.81 \\
& & 300 & 0.097 & 0.002 & 24.53 & 0.75 & 0.75 & 0.79 \\

\mydashedcline{2-8}
& \multirow{7}{*}{\emph{k=40}} &
    10  & 0.304 & 0.137 & 14.68 & 0.97 & 0.83 & 0.85 \\
& & 20  & 0.270 & 0.092 & 14.73 & 0.97 & 0.83 & 0.85 \\
& & 30  & 0.245 & 0.064 & 14.90 & 0.97 & 0.83 & 0.85 \\
& & 50  & 0.210 & 0.039 & 15.14 & 0.96 & 0.83 & 0.85 \\
& & 100 & 0.158 & 0.013 & 16.26 & 0.94 & 0.83 & 0.84 \\
& & 200 & 0.112 & 0.003 & 21.28 & 0.86 & 0.81 & 0.83 \\
& & 300 & 0.095 & 0.002 & 32.27 & 0.68 & 0.78 & 0.80 \\

\myhline
\multirow{14}{*}{RAD-Q resp. only} &
\multirow{7}{*}{\emph{k=20}} &
    10  & 0.278 & 0.097 & 11.71 &  0.96 & 0.81 & 0.84 \\
& & 20  & 0.241 & 0.053 & 11.81 &  0.97 & 0.81 & 0.84 \\
& & 30  & 0.218 & 0.029 & 12.02 &  0.96 & 0.81 & 0.84 \\
& & 50  & 0.185 & 0.014 & 12.26 &  0.96 & 0.81 & 0.84 \\
& & 100 & 0.143 & 0.004 & 14.79 &  0.93 & 0.80 & 0.83 \\
& & 200 & 0.113 & 0.002 & 25.31 &  0.82 & 0.76 & 0.79 \\
& & 300 & 0.102 & 0.002 & 45.82 &  0.65 & 0.72 & 0.75 \\

\mydashedcline{2-8}
& \multirow{7}{*}{\emph{k=40}} &
    10  & 0.280 & 0.091 & 14.72 & 0.97 & 0.83 & 0.85 \\
& & 20  & 0.242 & 0.046 & 14.92 & 0.96 & 0.83 & 0.85 \\
& & 30  & 0.217 & 0.028 & 15.09 & 0.96 & 0.83 & 0.85 \\
& & 50  & 0.185 & 0.013 & 15.69 & 0.96 & 0.83 & 0.85 \\
& & 100 & 0.142 & 0.003 & 18.84 & 0.92 & 0.82 & 0.84 \\
& & 200 & 0.111 & 0.002 & 39.53 & 0.77 & 0.79 & 0.80 \\
& & 300 & 0.103 & 0.002 & 83.36 & 0.54 & 0.74 & 0.76 \\

\myhline
\multirow{14}{*}{RAD-V}  &
\multirow{7}{*}{\emph{k=20}} &
    10  & 0.265 & 0.077 & 11.73 & 0.96 & 0.81 & 0.84 \\
& & 20  & 0.231 & 0.040 & 11.81 & 0.96 & 0.81 & 0.84 \\
& & 30  & 0.211 & 0.024 & 11.87 & 0.96 & 0.81 & 0.84 \\
& & 50  & 0.184 & 0.014 & 12.09 & 0.95 & 0.81 & 0.84 \\
& & 100 & 0.149 & 0.005 & 12.64 & 0.93 & 0.81 & 0.83 \\
& & 200 & 0.115 & 0.002 & 14.98 & 0.84 & 0.79 & 0.81 \\
& & 300 & 0.099 & 0.001 & 19.08 & 0.69 & 0.76 & 0.78 \\

\mydashedcline{2-8}
& \multirow{7}{*}{\emph{k=40}} &
    10  & 0.267 & 0.072 & 14.86 & 0.96 & 0.83 & 0.85 \\
& & 20  & 0.232 & 0.036 & 14.87 & 0.96 & 0.83 & 0.85 \\
& & 30  & 0.211 & 0.021 & 14.99 & 0.95 & 0.83 & 0.85 \\
& & 50  & 0.185 & 0.011 & 15.26 & 0.95 & 0.83 & 0.85 \\
& & 100 & 0.146 & 0.005 & 16.30 & 0.93 & 0.83 & 0.84 \\
& & 200 & 0.114 & 0.002 & 20.69 & 0.80 & 0.82 & 0.83 \\
& & 300 & 0.098 & 0.001 & 30.36 & 0.60 & 0.79 & 0.80 \\

\bottomrule
\end{tabular}
\centering
\caption{Results for detoxification task with the Perspective API as a toxicity classifier. Calls to the Perspective API were performed in June-July 2024.}
\label{tab:full_toxicity}
\end{table}

\subsection{Sentiment Control}

Results for sentiment control task with the GPT-2-Large are presented in \Cref{tab:full_sentiment}.

\begin{table}[ht]
\begin{tabular}{ccccccccc}
\toprule
    & & & \multicolumn{2}{c}{\% Positive Rate ($\uparrow$)} & \multicolumn{2}{c}{Fluency} & \multicolumn{2}{c}{Diversity ($\uparrow$)} \\[0.5ex]
Model &  &  $\beta$ & Negative & Neutral & PPL ($\downarrow$) & MAUVE ($\uparrow$) & Dist 2 & Dist 3 \\
\midrule
\multirow{12}{*}{RAD-Q distill} &
\multirow{6}{*}{\emph{k=20}} &

    10 & 12.94 & 81.08 & 12.16 & 0.96 & 0.76 & 0.78 \\
& & 20 & 24.87 & 91.00 & 12.85 & 0.94 & 0.75 & 0.78 \\
& & 30 & 35.18 & 94.87 & 14.11 & 0.92 & 0.75 & 0.78 \\
& & 40 & 43.60 & 96.60 & 15.74 & 0.89 & 0.75 & 0.78 \\
& & 50 & 49.84 & 97.38 & 18.03 & 0.86 & 0.74 & 0.78 \\
& & 60 & 55.34 & 97.87 & 20.09 & 0.81 & 0.73 & 0.77 \\
\mydashedcline{2-9}
& \multirow{6}{*}{\emph{k=40}} &
    10 & 13.50 & 80.97 & 15.53 & 0.95 & 0.78 & 0.79 \\
& & 20 & 26.66 & 91.45 & 17.20 & 0.94 & 0.78 & 0.79 \\
& & 30 & 39.12 & 95.32 & 18.29 & 0.90 & 0.78 & 0.80 \\
& & 40 & 48.28 & 96.98 & 20.57 & 0.86 & 0.77 & 0.79 \\
& & 50 & 55.94 & 97.80 & 24.36 & 0.82 & 0.76 & 0.79 \\
& & 60 & 61.39 & 98.21 & 28.20 & 0.77 & 0.75 & 0.78 \\

\myhline
\multirow{12}{*}{RAD-Q resp. only} &
\multirow{6}{*}{\emph{k=20}} &
    10 & 12.13 & 80.02 & 12.19 & 0.96 & 0.75 & 0.78 \\
& & 20 & 21.24 & 89.06 & 13.67 & 0.95 & 0.75 & 0.78 \\
& & 30 & 29.94 & 92.66 & 15.29 & 0.93 & 0.74 & 0.78 \\
& & 40 & 37.38 & 94.62 & 17.06 & 0.89 & 0.74 & 0.78 \\
& & 50 & 43.19 & 95.65 & 20.11 & 0.85 & 0.72 & 0.77 \\
& & 60 & 47.19 & 96.20 & 23.07 & 0.82 & 0.71 & 0.76 \\

\mydashedcline{2-9}
& \multirow{6}{*}{\emph{k=40}} &
    10 & 12.17 & 79.49 & 15.58 & 0.95 & 0.78 & 0.79 \\
& & 20 & 22.82 & 89.40 & 17.12 & 0.94 & 0.77 & 0.79 \\
& & 30 & 32.63 & 93.22 & 19.46 & 0.91 & 0.77 & 0.79 \\
& & 40 & 41.58 & 95.15 & 24.36 & 0.87 & 0.76 & 0.79 \\
& & 50 & 47.98 & 96.10 & 27.48 & 0.81 & 0.75 & 0.79 \\
& & 60 & 53.76 & 96.58 & 30.91 & 0.76 & 0.74 & 0.78 \\

\myhline
\multirow{12}{*}{RAD-V}  &
\multirow{6}{*}{\emph{k=20}} &
    10 & 19.94 & 86.06 & 12.61 & 0.95 & 0.75 & 0.78 \\
& & 20 & 35.37 & 92.70 & 14.87 & 0.92 & 0.75 & 0.78 \\
& & 30 & 43.87 & 94.82 & 17.36 & 0.87 & 0.74 & 0.78 \\
& & 40 & 48.51 & 95.74 & 20.35 & 0.83 & 0.73 & 0.77 \\
& & 50 & 50.96 & 96.20 & 23.78 & 0.80 & 0.72 & 0.76 \\
& & 60 & 52.99 & 96.62 & 28.36 & 0.76 & 0.71 & 0.75 \\

\mydashedcline{2-9}
& \multirow{6}{*}{\emph{k=40}} &
    10 & 22.03 & 86.56 & 16.20 & 0.95 & 0.78 & 0.79 \\
& & 20 & 40.09 & 93.14 & 19.90 & 0.91 & 0.78 & 0.80 \\
& & 30 & 50.61 & 95.16 & 23.45 & 0.85 & 0.77 & 0.79 \\
& & 40 & 55.77 & 96.05 & 27.74 & 0.80 & 0.76 & 0.79 \\
& & 50 & 58.69 & 96.54 & 33.55 & 0.76 & 0.75 & 0.78 \\
& & 60 & 60.66 & 96.81 & 41.57 & 0.72 & 0.74 & 0.77 \\

\bottomrule
\end{tabular}
\centering
\caption{Results for sentiment control task with GPT-2 model.}
\label{tab:full_sentiment}
\end{table}

\begin{table}[ht]
\begin{tabular}{ccccccccc}
\toprule
& & & \multicolumn{2}{c}{Toxicity ($\downarrow$)} & \multicolumn{2}{c}{Fluency} & \multicolumn{2}{c}{Diversity ($\uparrow$)} \\[0.5ex]
Model & Base LM & $\beta$ & Avg. Max & Toxic & PPL ($\downarrow$) & MAUVE ($\uparrow$)  & Dist 2 & Dist 3 \\
  &  &  & Toxicity &  Rate &   &  & &  \\

\midrule
\multirow{10}{*}{RAD-Q distill} &
\multirow{5}{*}{ LLaMa-2-7b} &
     10 & 0.260 & 0.092 & 21.88 & 0.99 & 0.79 & 0.81 \\
& &  50 & 0.181 & 0.022 & 23.16 & 0.97 & 0.79 & 0.81 \\
& & 100 & 0.142 & 0.010 & 25.28 & 0.95 & 0.79 & 0.81 \\
& & 200 & 0.103 & 0.003 & 37.92 & 0.87 & 0.77 & 0.79 \\
& & 300 & 0.082 & 0.002 & 47.13 & 0.75 & 0.74 & 0.76 \\

\mydashedcline{2-9}

& \multirow{5}{*}{LLaMa-2-13b} &
     10 & 0.268 & 0.104 & 22.74 & 0.99 & 0.79 & 0.81 \\
& &  50 & 0.188 & 0.027 & 23.58 & 0.97 & 0.79 & 0.80 \\
& & 100 & 0.148 & 0.013 & 27.59 & 0.95 & 0.78 & 0.80 \\
& & 200 & 0.108 & 0.004 & 36.87 & 0.87 & 0.76 & 0.78 \\
& & 300 & 0.086 & 0.003 & 55.53 & 0.76 & 0.73 & 0.75 \\

\myhline
\multirow{10}{*}{RAD-V}  &
\multirow{5}{*}{ LLaMa-2-7b} &
     10 & 0.244 & 0.069 & 21.76 & 0.99 & 0.79 & 0.81 \\
& &  50 & 0.162 & 0.010 & 22.62 & 0.96 & 0.79 & 0.81 \\
& & 100 & 0.123 & 0.004 & 27.69 & 0.94 & 0.79 & 0.80 \\
& & 200 & 0.088 & 0.002 & 41.27 & 0.83 & 0.77 & 0.78 \\
& & 300 & 0.072 & 0.002 & 53.68 & 0.65 & 0.74 & 0.75 \\

\mydashedcline{2-9}

& \multirow{5}{*}{ LLaMa-2-13b} &

     10 & 0.252 & 0.079 & 22.15 & 0.99 & 0.79 & 0.80 \\
& &  50 & 0.169 & 0.012 & 23.74 & 0.97 & 0.79 & 0.80 \\
& & 100 & 0.128 & 0.004 & 31.25 & 0.94 & 0.78 & 0.80 \\
& & 200 & 0.091 & 0.002 & 48.09 & 0.83 & 0.76 & 0.77 \\
& & 300 & 0.075 & 0.001 & 60.82 & 0.69 & 0.73 & 0.74 \\

\bottomrule
\end{tabular}
\centering
\caption{Results for detoxification task with LLaMa-2 base models. Toxicity metrics are computed with Perspective API.}
\label{tab:llama_toxicity}
\end{table}

\begin{table}[ht]
\begin{tabular}{crcccc}
\toprule
        & &  & Avg. Max & Toxicity \\
  Model & & $\beta$  & Toxicity & Rate  \\

\midrule
\multirow{14}{*}{RAD-Q distill} &
\multirow{7}{*}{\emph{k=20}} &
     10 & 0.286 & 0.270 \\
& &  20 & 0.239 & 0.220 \\
& &  30 & 0.209 & 0.190 \\
& &  50 & 0.160 & 0.140 \\
& & 100 & 0.089 & 0.070 \\
& & 200 & 0.038 & 0.025 \\
& & 300 & 0.022 & 0.011 \\

\mydashedcline{2-5}
& \multirow{7}{*}{\emph{k=40}} &
     10 & 0.297 & 0.282 \\
& &  20 & 0.247 & 0.232 \\
& &  30 & 0.209 & 0.192 \\
& &  50 & 0.154 & 0.133 \\
& & 100 & 0.084 & 0.066 \\
& & 200 & 0.034 & 0.020 \\
& & 300 & 0.021 & 0.009 \\

\myhline
\multirow{14}{*}{RAD-Q responses only} &
\multirow{7}{*}{\emph{k=20}} &
     10 & 0.257 & 0.238 \\
& &  20 & 0.212 & 0.192 \\
& &  30 & 0.178 & 0.158 \\
& &  50 & 0.135 & 0.112 \\
& & 100 & 0.084 & 0.063 \\
& & 200 & 0.056 & 0.035 \\
& & 300 & 0.050 & 0.029 \\

\mydashedcline{2-5}
& \multirow{7}{*}{\emph{k=40}} &
     10 & 0.267 & 0.249 \\
& &  20 & 0.214 & 0.193 \\
& &  30 & 0.179 & 0.157 \\
& &  50 & 0.133 & 0.109 \\
& & 100 & 0.083 & 0.061 \\
& & 200 & 0.060 & 0.036 \\
& & 300 & 0.054 & 0.031 \\

\myhline
\multirow{14}{*}{RAD-V}  &
\multirow{7}{*}{\emph{k=20}} &
     10 & 0.242 & 0.223 \\
& &  20 & 0.189 & 0.167 \\
& &  30 & 0.159 & 0.137 \\
& &  50 & 0.118 & 0.097 \\
& & 100 & 0.072 & 0.052 \\
& & 200 & 0.039 & 0.021 \\
& & 300 & 0.027 & 0.011 \\

\mydashedcline{2-5}
& \multirow{7}{*}{\emph{k=40}} &
     10 & 0.245 & 0.225 \\
& &  20 & 0.189 & 0.166 \\
& &  30 & 0.159 & 0.137 \\
& &  50 & 0.116 & 0.090 \\
& & 100 & 0.071 & 0.048 \\
& & 200 & 0.037 & 0.019 \\
& & 300 & 0.025 & 0.008 \\

\bottomrule
\end{tabular}
\centering
\caption{Additional results for detoxification task with the GPT-2 and the RoBERTa  \citep{nicholas22aira} as toxicity classifier. Other metrics are the same as in \Cref{tab:full_toxicity}.}
\label{tab:full_toxicity_roberta}
\end{table}

\begin{table}[ht]
\begin{tabular}{ccccc}
\toprule
& & & \multicolumn{2}{c}{Toxicity ($\downarrow$)} \\[0.5ex]
Model & Base LM & $\beta$ & Avg. Max & Toxic \\
  &  &  & Toxicity &  Rate  \\

\midrule
\multirow{10}{*}{RAD-Q distill} &
\multirow{5}{*}{ LLaMa-2-7b} &
     10 & 0.273 & 0.256 \\
& &  50 & 0.160 & 0.139 \\
& & 100 & 0.110 & 0.090 \\
& & 200 & 0.069 & 0.050 \\
& & 300 & 0.051 & 0.030 \\
\mydashedcline{2-5}

& \multirow{5}{*}{LLaMa-2-13b} &
     10 & 0.285 & 0.266 \\
& &  50 & 0.176 & 0.151 \\
& & 100 & 0.122 & 0.103 \\
& & 200 & 0.074 & 0.052 \\
& & 300 & 0.054 & 0.035 \\

\myhline
\multirow{10}{*}{RAD-V}  &
\multirow{5}{*}{ LLaMa-2-7b} &
     10 & 0.254 & 0.233   \\
& &  50 & 0.135 & 0.112   \\
& & 100 & 0.091 & 0.070   \\
& & 200 & 0.055 & 0.033   \\
& & 300 & 0.044 & 0.024   \\

\mydashedcline{2-5}

& \multirow{5}{*}{ LLaMa-2-13b} &

     10 & 0.267 & 0.246 \\
& &  50 & 0.150 & 0.125 \\
& & 100 & 0.098 & 0.072 \\
& & 200 & 0.061 & 0.040 \\
& & 300 & 0.048 & 0.028 \\

\bottomrule
\end{tabular}
\centering
\caption{Additional results for the detoxification task with the LLaMa-2 base models with the RoBERTa  \citep{nicholas22aira} as toxicity classifier.}
\label{tab:llama_toxicity_roberta}
\end{table}

\subsection{Generated Examples}

Examples for the detoxification and sentiment control are presented in the \Cref{tab:gens_toxicity_beta}, \Cref{tab:gens_toxicity} and \Cref{tab:gens_sentiment}.

\begin{sidewaystable}[ht]
    \centering
    \small
    \begin{tabular}{clc}
       \multicolumn{3}{l}{\textbf{Prompt: X”) received a series of text messages from her attacker, including one that read “If I... }} \\ \myhline
       $\beta$ & \textbf{Continuation} & \textbf{Toxicity} \\
       $10$ & have to, I will kill myself and then I will take all of you as well and all of & 0.86 \\
       $50$ & get out of here I'm going to kill you“" “which was translated into a & 0.89 \\
       $100$ & get in my car I'm going to drive over a curb and break it in two,” & 0.47\\
       $300$ & ever have the opportunity to meet with anyone in real life, it would be you. I know I  & 0.24 \\
    \end{tabular}
    \caption{Most toxic samples (out of 25 generated) from the RAD-Q with different $\beta$ coefficients, and corresponding Toxicity scores from the Perspective API.}
    \label{tab:gens_toxicity_beta}

\bigskip\bigskip

    \begin{tabular}{ll}
       \multicolumn{2}{l}{\textbf{Prompt: No matter who you'll pick you'll... }} \\ \myhline
       GPT-2 & need to find the correct pieces for it and the right combination of characters and weapons to kill the bosses \\
       GeDi & need the focus for this one. Personally, I had a good time in jokingly calling out Card \\
       DExperts & need a versatile DH player and Dahl will certainly help you as he’s ideally positioned at the back of \\
       RAD-V  &  still be playing against a well-prepared team. In this day and age of instant gratification we \\ 
       RAD-Q  &  probably want to make sure you do at least some research on a team before deciding, as the competition \\
       \myhline
    \end{tabular}
    \caption{Detoxification task: samples from the RAD-Q and RAD-V with $k=20$, $\beta=100$. Examples for the GeDi and DExperts are from \citet{deng_2023_rad}.}
    \label{tab:gens_toxicity}

\bigskip\bigskip
    
    \begin{tabular}{ll}
       \multicolumn{2}{l}{\textbf{Prompt: Meanwhile the iron and lead... }} \\ \myhline
       GPT-2 &  in the blood of an innocent child may be of no consequence if the parent dies before he or she \\
       GeDi &  gathered, our new friends danced, jests were merrily spiced, and plenty of songs fired \\
       DExperts & fields may not seem like the perfect areas for reclaiming coal and steel, but technology has brought mining \\
       RAD-V  &  industries, and also the energy and materials markets in the United States, have grown and matured. The \\
       RAD-Q  &  in my life have a great effect on me. They bring me to life when I think of all \\
       \myhline
    \end{tabular}
    \caption{Sentiment control task: samples from the RAD-Q and RAD-V with $k=20$, $\beta=30$. Examples for the GeDi and DExperts are from \citet{deng_2023_rad}.}
    
    \label{tab:gens_sentiment}
\end{sidewaystable}

\end{document}